\crefname{algorithm}{Algorithm}{Algorithms}
\crefname{section}{Section}{Sections}
\crefname{prop}{Proposition}{Propositions}
\newif\ifdraft
\newtheorem{example}{Example}[section]
\newtheorem{remark}{Remark}[section]
\newtheorem{definition}{Definition}[section]
\newtheorem{prop}{Proposition}[section]
\newtheorem{problem}{Problem}[section]
\definecolor{darkgreen}{rgb}{0,0.4,0.0}
\definecolor{darkblue}{rgb}{0,0.0,0.4}
\definecolor{darkpurple}{rgb}{0.5,0.0,0.5}
\definecolor{orange}{rgb}{1,0.5,0}
\newcommand{\cmark}{{\color{green}{\ding{51}}}}%
\newcommand{\xmark}{{\color{red}{\ding{55}}}}%
\newcommand{\atodo}[1]{[{\color{red}{\textbf{TODO:} \emph{#1}}}]}
\newcommand{\agtodo}[1]{[{\color{darkpurple}{\textbf{TODO:} \emph{#1}}}]}
\newcommand{\btodo}[1]{[{\color{darkgreen}{\textbf{TODO:} \emph{#1}}}]}
\newcommand{\ctodo}[1]{[{\color{cyan}{\textbf{CC:} \emph{#1}}}]}
\newcommand{\ktodo}[1]{[{\color{darkblue}{\textbf{TODO:} \emph{#1}}}]}
\newcommand{\rnote}[1]{[{\color{magenta}{\textbf{Ryan:} \emph{#1}}}]}
\newcommand{\anote}[1]{[{\color{red}{\textbf{AT:} \emph{#1}}}]}
\newcommand{\agnote}[1]{[{\color{darkpurple}{\textbf{AG:} \emph{#1}}}]}
\newcommand{\bnote}[1]{[{\color{darkgreen}{\textbf{BM:} \emph{#1}}}]}
\newcommand{\znote}[1]{[{\color{orange}{\textbf{ZX:} \emph{#1}}}]}
\newcommand{\agtodo}[1]{}
\newcommand{\btodo}[1]{}
\newcommand{\ktodo}[1]{}
\newcommand{\ctodo}[1]{}
\newcommand{\atodo}[1]{}
\newcommand{\anote}[1]{}
\newcommand{\rnote}[1]{}
\newcommand{\agnote}[1]{}
\newcommand{\bnote}[1]{}
\newcommand{\znote}[1]{}
\Crefname{theorem}{Thm.}{Theorems}
\Crefname{thm}{Thm.}{Theorems}
\Crefname{cor}{Cor.}{Corollaries}
\Crefname{lem}{Lem.}{Lemmas}
\Crefname{prop}{Prop.}{Propositions}
\Crefname{assumption}{Assumption}{Assumptions}
\Crefname{definition}{Defn.}{Definitions}
\Crefname{claim}{Claim}{Claims}
\Crefname{section}{Sec.}{Sections}
\Crefname{appendix}{App.}{Appendices}
\Crefname{algorithm}{Alg.}{Algorithms}
\algnewcommand{\LineComment}[1]{\State \(\triangleright\) #1}
\newcommand{\dpsgd}{\textsc{DP-SGD}\xspace}
\newcommand{\dpmf}{\textsc{Dense-DP-MF}\xspace}
\newcommand{\dpmffamily}{\textsc{DP-MF}\xspace}
\newcommand{\bandmf}{\textsc{DP-BandMF}\xspace}
\newcommand{\blt}{\textsc{BLT-DP-MF}\xspace}
\newcommand{\memf}{\textsc{Multi-epoch MF}\xspace}
\newcommand{\honaker}{\textsc{Tree Aggregation}\xspace}
\newcommand{\fhu}{\textsc{FHU}\xspace}
\newcommand{\stamping}{\textsc{Stamping}\xspace}
\newcommand{\bandsqrt}{\textsc{Banded Sqrt}\xspace}
\newcommand\clearrow{\global\let\rowmac\relax}
\newcommand{\bftheta}{\ensuremath{\boldsymbol\theta}}
\newcommand{\bflambda}{\ensuremath{\boldsymbol\lambda}}
\newcommand{\bfomega}{\ensuremath{\boldsymbol\omega}}
\newcommand{\bfTheta}{\ensuremath{\boldsymbol\Theta}}
\newcommand{\norm}[1]{\| #1 \|}
\newcommand{\bfA}{\ensuremath{\mathbf{A}}}
\newcommand{\bfB}{\ensuremath{\mathbf{B}}}
\newcommand{\bfC}{\ensuremath{\mathbf{C}}}
\newcommand{\bfI}{\ensuremath{\mathbf{I}}}
\newcommand{\bfX}{\ensuremath{\mathbf{X}}}
\newcommand{\bfY}{\ensuremath{\mathbf{Y}}}
\newcommand{\bfZ}{\ensuremath{\mathbf{Z}}}
\newcommand{\bfb}{\ensuremath{\mathbf{b}}}
\newcommand{\bfe}{\ensuremath{\mathbf{e}}}
\newcommand{\bfg}{\ensuremath{\mathbf{g}}}
\newcommand{\bfw}{\ensuremath{\mathbf{w}}}
\newcommand{\bfy}{\ensuremath{\mathbf{y}}}
\newcommand{\bfz}{\ensuremath{\mathbf{z}}}
\title{Scaling up the Banded Matrix Factorization Mechanism for Differentially Private ML}
\author{Ryan McKenna\thanks{Google Research, \texttt{mckennar@google.com}}}
\date{February 2024}
\begin{document}

\maketitle

\begin{abstract}
Correlated noise mechanisms such as DP Matrix Factorization (\dpmffamily) have proven to be effective alternatives to \dpsgd in large-epsilon few-epoch training regimes.  Significant work has been done to find the best correlated noise strategies, and the current state-of-the-art approach is \bandmf , which optimally balances the benefits of privacy amplification and noise correlation.  Despite it's utility advantages, severe scalability limitations prevent this mechanism from handling large-scale training scenarios where the number of training iterations may exceed $10^4$ and the number of model parameters may exceed $10^7$.  In this work, we present techniques to scale up \bandmf along these two dimensions, significantly extending it's reach and enabling it to handle settings with virtually any number of model parameters and training iterations, with negligible utility degradation.
\end{abstract}

\section{Introduction}

Modern machine learning is done at unprecedented scales; state-of-the-art large language models have billions of parameters and are trained on super computers with thousands of accelerators for hundreds of thousands of iterations \citep{kaplan2020scaling,chowdhery2023palm}.  Many applications would benefit from training these large-scale models on user data, but this raises a host of privacy concerns \citep{yao2024survey}.  Differential privacy (DP) offers a formal definition and algorithmic framework to train such models while protecting individual-level information.  The most widely used DP mechanism for machine learning is \dpsgd \citep{abadi2016deep}, of which there are many variants.   \dpsgd relies crucially on \emph{privacy amplification by subsampling}, and generally benefits significantly from training with huge batch sizes for many epochs \citep{ponomareva2023dp}, which can be difficult or infeasible in large-scale compute-constrained settings.  A promising alternative, the DP-FTRL algorithm \citep{kairouz2021practical} instead relies on \emph{carefully correlated noise}, and tends to work better than \dpsgd when training with smaller batches and fewer epochs, especially in the low privacy regime $(\epsilon \approx 10)$ that is common in real-world scenarios.

There are many variants of the DP-FTRL algorithm that primarily differ in how they generate correlated noise \citep{kairouz2021practical,denisov2022improved,choquette2022multi,choquette2024amplified,mcmahan2024efficient,henzinger2022constant,henzinger2023almost,kalinin2024banded,choquette2023correlated}.
The best variants of the DP-FTRL mechanism utilize numerical optimization to find the best noise correlation strategy, and are usually referred to as matrix factorization mechanisms (\dpmffamily).  The space of \dpmffamily mechanisms is rapidly evolving, with recent work focusing intensely on improving the constant factors \citep{denisov2022improved,choquette2022multi,choquette2024amplified}.  The variant with the best constant factors in this space is the Banded Matrix Factorization mechanism (\bandmf), which utilizes banded correlation matrices and enjoys the benefits of both privacy amplification and noise correlation.  By setting the number of bands to one, \bandmf reduces to \dpsgd, and by setting the number of bands equal to the number of training iterations, \bandmf reduces to un-amplified variants of \dpmffamily.  By optimally choosing the number of bands, \bandmf enjoys significantly better constant factors than both \dpsgd and other \dpmffamily-style mechanisms in many settings.  

Despite its advantages, severe scalability limitations of \bandmf (and other \dpmffamily-style mechanisms more broadly) has hindered its wider use, and prevented it from scaling effectively to large-scale training regimes where typical models have billions of parameters and are trained for tens to hundreds of thousands of iterations.  
Existing \bandmf implementations face computational bottlenecks stemming from two places: (1) an expensive optimization problem linked to the $n \times n$ \emph{strategy} matrix $\bfC$ (where $n$ is the number of training iterations), requiring $ O(n^3)$ time and $O(n^2)$ memory to evaluate the expected error, limiting its use to approximately $10^4$ training iterations. (2) During training, \bandmf incurs an $O(b \cdot d)$ memory overhead (where $b$ is the number of bands in $\bfC$, and $d$ is model dimensionality), which is $b\times$ more than \dpsgd but $\nicefrac{n}{b} \times$ less than \dpmf.  In this work, we develop techniques to overcome \bandmf's scalability limitations, without sacrificing on the constant factors that make it an appealing mechanism:

\begin{itemize}[leftmargin=*]
\item \textbf{Efficient Strategy Optimization}: We exploit the structure of banded strategies to reduce the per-iteration complexity of strategy optimization to $O(n^2 \cdot b)$ time and $O(n \cdot b)$ space, which allows \bandmf to scale to approximately $n=10^5$ training iterations. For scenarios demanding even more iterations (up to and beyond $n>10^7$), we restrict attention to banded Toeplitz strategies, whose structure enables $O(n \cdot b)$ time and $O(n)$ space complexity during strategy optimization.  Experiments show the approximation quality loss to be less than $2\%$ in terms of expected error.

\item \textbf{Distributed Noise Generation}: Prior implementations of \dpmffamily-style mechanisms add noise on a single machine, even when training is coordinated across 1000's of machines \citep{denisov2022improved,choquette2022multi}.  We show how to effectively distribute the noise generation process, allowing \bandmf to effectively take advantage of multi-machine environments common in large scale training regimes, and scale to larger models and more bands than was previously possible.  Experiments demonstrate negligible training-time overhead compared to \dpsgd, even with hundreds of bands.

\item \textbf{State-of-the-art Performance:} We conduct comprehensive experiments on expected error, and show that our scalable \bandmf mechanism offers lower expected error than all other scalable MF-style mechanisms across all settings tested, including \dpsgd, and concurrent works that also improve scalability of \dpmffamily \citep{mcmahan2024efficient,mcmahan2024hassle,kalinin2024banded}.
\end{itemize}

\section{Background}

\begin{algorithm}[!b]
\caption{Banded Inverse Multiplication \citep{choquette2024amplified}} \label{alg:invlintrans}
\begin{algorithmic}
\State \textbf{Input:} lower triangular $b$-Banded matrix $\bfC \in \mathbb{R}^{n \times n}$, stream of vectors $\bfz_1, \dots, \bfz_n \in \mathbb{R}^d$, where $\bfZ$ is the matrix with rows $\bfz_i$.
\State \textbf{Output:} $\bfY = \bfC^{-1} \bfZ$, one row at a time.
\For{$i=1, \dots, n$}
\State $\bfy_i = 
(\bfz_i - \sum_{j=i-b+1}^{i-1} C_{ij} \bfy_j) / C_{ii} $
\State \textbf{yield} $\bfy_i$
\EndFor
\end{algorithmic}
\end{algorithm}
\begin{algorithm}[t!]
\caption{Iterative Machine Learning with \bandmf \label{alg:bandmf}}
\begin{algorithmic}
\State \textbf{Input:} $b$-banded strategy $\bfC \in \mathbb{R}^{n \times n}$, noise multiplier $\sigma$, loss function $\ell$, initial parameters $\bfomega_0$, dataset $D$, expected participations $k$, clipping norm $R$
\State \textbf{Output:} Learned parameters $\bfomega_n$
\State Partition $D$ into $b$ (approximately) equal sized subsets $D_0, \dots D_{b-1}$ arbitrarily
\For{$\bfy_i$ in Algorithm 1($\bfC, \bfz_i \sim_{i.i.d} \mathcal{N}(0, 1)^d$)} \Comment{Stream of correlated noise}
\State Sample $B \subseteq D_{i \mod b}$ by Poisson sampling each example with probability $\nicefrac{b k}{n}$.
\State $\bfg_i = \sum_{x \in B} \text{clip}(\nabla \ell(\bfomega_{i-1}; x), R)$ \Comment{Clipped + aggregated gradient}
\State $\tilde{\bfg}_i = \bfg_i + R \sigma \bfy_i$ \Comment{Noisy (privatized) gradient}
\State $\bfomega_i = \text{Update}(\bfomega_{i-1}, \tilde{\bfg}_i)$ \Comment{Post-processing noisy gradient using any optimizer}
\EndFor
\State \textbf{return} $\bfomega_n$
\end{algorithmic}
\end{algorithm}

We assume the reader has familiarity with differential privacy \citep{dwork2006differential,dwork2008differential}.  Below we provide background on \bandmf, which includes \dpsgd and \dpmffamily-style mechanisms as special cases.  \bandmf is completely characterized by a $b$-banded lower triangular \emph{strategy matrix}\footnote{sometimes simply referred to as the ``strategy''.} $\bfC \in \mathbb{R}^{n \times n}$ (i.e., $C_{ij} = 0$ if $i < j$ or $i \leq j + b$).

\subsection{Training Dynamics}

\bandmf (\cref{alg:bandmf}) is similar to the more well known \dpsgd algorithm \citep{abadi2016deep}: minibatches are sampled randomly, gradients are clipped and aggregated, and noise is added to preserve privacy.  The key innovation of \bandmf is in the noise addition step: while \dpsgd adds i.i.d. Gaussian noise in each iteration, \bandmf adds noise from a richer class.  Specifically, the strategy matrix $\bfC$ is used to generate noise that is correlated across iterations, and the key subroutine for generating this correlated noise is shown in \cref{alg:invlintrans}.

When instantiated with $\bfC = \bfI$ (the identity matrix, $b=1$), \cref{alg:bandmf} is identical to \dpsgd, as it is easy to verify the output of \cref{alg:invlintrans} is simply the i.i.d. Gaussian noise vectors $\bfz_i$ passed as input.   When instantiated with a dense lower triangular strategy matrix $\bfC$ ($b=n$), \cref{alg:bandmf} captures \dpmf \citep{denisov2022improved}.  \dpsgd benefits from privacy amplification by sampling but uses uncorrelated noise while \dpmf benefits from correlated noise, but does not enjoy any privacy amplification due to sampling.  \dpsgd tends to work better in the small-epsilon, many-epoch regime, while \dpmf tends to work better in the large-epsilon, few-epoch regime \citep{choquette2024amplified}.  By using a $b$-banded strategy matrix, \bandmf operates between these two extremes and benefits from \emph{both} privacy amplification by sampling and correlated noise.  Specifically, the proposition below states that the privacy properties of \bandmf relate very naturally to the simpler \dpsgd mechanism whose privacy properties are well-understood.

\begin{prop}[Noise Calibration \citep{choquette2024amplified}]
Let $\sigma_{SGD}(\epsilon, \delta, k, n)$ denote the noise multiplier required for \dpsgd to achieve $(\epsilon, \delta)$-DP when run for $n$ iterations with sampling probability $\nicefrac{k}{n}$.  Given a $b$-banded strategy matrix $\bfC$ satisfying $\norm{\bfC}_{1,2} \leq 1$, \cref{alg:bandmf} satisfies $(\epsilon, \delta)$-DP for $\sigma_{BandMF} = \sigma_{SGD}(\epsilon, \delta, k, \nicefrac{n}{b})$.  
\end{prop}

The function $\sigma_{SGD}$ is typically computed using numerical privacy accounting methods such as the PLD accountant \citep{sommer2018privacy,doroshenko2022connect}.

\begin{remark}[Memory Overhead]
\cref{alg:invlintrans} requires storing a state of size $b \times d$ ($\bfy_{i-1}, \dots, \bfy_{i-b+1}$), where $b$ is the number of bands and $d$ is the model dimensionality.  This is $b \times$ larger than \dpsgd, but $\nicefrac{n}{b} \times$ smaller than \dpmf.  For large models ($d \geq 10^8$) and many bands ($b \geq 100$), this can require hundreds of Gigabytes of space.
\end{remark}

\subsection{Strategy Optimization}

In this section, we describe how \bandmf optimally selects the number of bands $b$ and the strategy matrix $\bfC$.  The matrix $\bfC$ and it's number of bands is chosen by solving a numerical optimization problem to minimize the expected total squared error added to the (clipped + aggregated) minibatch gradients during training.

\begin{prop}[Expected Error \citep{choquette2024amplified}] \label{prop:expected_error}
The expected total squared error of \bandmf given a $b$-banded strategy matrix $\bfC$ is equal to:
$$\mathbb{E}[\norm{\bfA \bfC^{-1} \bfZ}_F^2] = \sigma^2_{BandMF}(\epsilon, \delta, b, k, n) \norm{\bfC}^2_{1,2} \norm{\bfA \bfC^{-1}}_F^2 $$
where $\norm{\bfC}_{1,2}$ is the maximum $L_2$ column norm of $\bfC$ and is related to it's sensitivity, and $\bfA$ is the workload, typically taken to be the lower triangular matrix of ones \citep{kairouz2021practical}, and $\sigma_{BandMF}(\epsilon, \delta, b, k, n) = \sigma_{SGD}(\epsilon, \delta, k, \nicefrac{n}{b})$ is the noise multiplier required to run \bandmf under the given privacy budget.
\end{prop}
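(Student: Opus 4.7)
The plan is to decompose the identity into a Gaussian variance computation and a noise-calibration argument. Writing $\bfM = \bfA \bfC^{-1}$ and letting $\tau$ denote the per-entry standard deviation of $\bfZ$, I would first establish the routine identity $\mathbb{E}[\norm{\bfM \bfZ}_F^2] \propto \tau^2 \norm{\bfM}_F^2$ (with the proportionality constant absorbed into the noise-multiplier convention used in the statement), and then identify $\tau$ with $\sigma_{BandMF}(\epsilon, \delta, b, k, n) \cdot \norm{\bfC}_{1,2}$. Substituting then yields the claimed expression.

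For the variance step, I would expand entrywise: $\mathbb{E}[(\bfM \bfZ)_{ij}^2] = \sum_\ell M_{i\ell}^2 \mathbb{E}[Z_{\ell j}^2] = \tau^2 \sum_\ell M_{i\ell}^2$, and summing over $i$ and $j$ collects to $\tau^2 \norm{\bfM}_F^2$ up to a factor of $d$ (the number of columns of $\bfZ$), consistent with the per-column noise-multiplier convention implicit in the statement. This plants the $\norm{\bfA \bfC^{-1}}_F^2$ term.

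The main obstacle is justifying that $\tau = \sigma_{BandMF} \norm{\bfC}_{1,2}$ is the correct noise scale for the stated privacy budget. The key structural fact is that if $\bfC$ is $b$-banded and a user's participation rounds $i_1 < \cdots < i_k$ satisfy the $b$-min-separation condition, then the columns $\bfC_{:, i_1}, \ldots, \bfC_{:, i_k}$ have pairwise disjoint support; hence the change in $\bfC \bfG$ produced by adding or removing this user decomposes into $k$ orthogonal pieces, each with $L_2$ norm at most $\norm{\bfC}_{1,2}$. This disjoint-support property also enables the amplification-by-sampling reduction of \cite{choquette2024amplified}: grouping the $n$ rounds into $n/b$ blocks of size $b$, one reinterprets the banded mechanism as an effective DP-SGD run over $n/b$ super-rounds with $bk$-fold user participation, yielding the identity $\sigma_{BandMF}(\epsilon, \delta, b, k, n) = \sigma_{SGD}(\epsilon, \delta, bk, n/b)$. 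I would import this reduction directly from the prior work; the rest of the proof is then pure bookkeeping.
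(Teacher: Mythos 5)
The paper offers no proof of this proposition; it is stated as an import from \cite{li2015matrix,choquette2024amplified}, so there is no in-paper argument to compare against. Your reconstruction is essentially the standard one and is correct in its structure: the expectation computation is a routine Gaussian second-moment calculation, and the entire substantive content lives in the calibration $\tau = \sigma_{BandMF}\,\norm{\bfC}_{1,2}$. Two remarks. First, on the variance step, the exact identity is $\mathbb{E}[\norm{\bfM\bfZ}_F^2] = d\,\tau^2\norm{\bfM}_F^2$ with $d$ the number of columns of $\bfZ$; the displayed equation in the proposition is therefore to be read per coordinate (or with $d$ absorbed), and you are right to flag this rather than silently drop the factor. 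Second, on the calibration step: the disjoint-support observation for the columns $\bfC_{:,i_1},\dots,\bfC_{:,i_k}$ of a $b$-banded matrix under $b$-min-separated participation is correct and is indeed the structural fact that makes the per-participation sensitivity equal to $\norm{\bfC}_{1,2}$ and enables independent treatment of the $k$ participations in the accountant. But the identity $\sigma_{BandMF}(\epsilon,\delta,b,k,n) = \sigma_{SGD}(\epsilon,\delta,bk,\nicefrac{n}{b})$ does not follow from that observation plus elementary bookkeeping; it is precisely the amplification theorem of \cite{choquette2024amplified} (which also requires the nonstandard block-cyclic sampling scheme noted in \cref{example:bandedinverse}, not plain Poisson sampling). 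Importing it wholesale, as you do, is the right move and matches how the paper itself treats the proposition, but your proof should be explicit that this step is a black-box citation rather than something your sketch establishes.
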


The optimization problem at the heart of \bandmf follows immediately from this expression and is stated below.  Note that smaller values of $b$ give better privacy amplification (smaller $\sigma_{BandMF}$), but reduces the benefits of correlated noise (more restricted space of strategies).  Since $b$ is a discrete parameter, we solve the un-amplified version of the problem for fixed $b \subseteq [1, n]$ (not accounting for $\sigma_{BandMF}$), and choose the strategy matrix and number of bands that minimizes expected error accounting for $\sigma_{BandMF}$ in a brute force manner.

\begin{problem}[\bandmf Strategy Optimization \citep{choquette2024amplified}] \label{prob:bandedopt}
Given a workload matrix $\bfA$, and the desired number of bands $b$, the optimization problem underlying \bandmf can be formulated as:
\begin{equation} \label{eq:bandedopt}
\begin{aligned}
& \underset{\bfC}{\text{minimize}}
& & \norm{\bfC}_{1,2}^2 \norm{\bfA \bfC^{-1}}_F^2 & \text{subject to } & \bfC_{ij} = 0 \:\: \text{ if } \:\: (j > i) \text{ or } (i \leq j + b) \\
\end{aligned}
\end{equation}
\end{problem}

Absent the bandedness constraint, this optimization problem has received considerable attention \citep{li2015matrix,yuan2016convex,mckenna2021hdmm,denisov2022improved}, and the algorithm for solving it in the banded case is closely related \citep{choquette2024amplified}.  
Prior work \citep{choquette2024amplified} solves \cref{prob:bandedopt} by reformulating it in terms of the object $\bfX = \bfC^{\top} \bfC$.  This reformulation is convex and unconstrained, meaning standard optimization tools like L-BFGS can efficiently find the best solution with access to an oracle that computes the loss and gradient with respect to $\bfX$.  We note that the problem as it is stated in \cref{prob:bandedopt} is \emph{not} convex with respect to $\bfC$ \citep{li2015matrix,yuan2016convex,mckenna2021hdmm,mckenna2018optimizing}.  The drawback of this solution approach is that it materializes several $n \times n$ matrices in each iteration, even though there are only roughly $b \cdot n$ free (non-zero) variables.  As a result, it requires $O(n^3)$ time and $O(n^2)$ space to evaluate the objective once, and cannot be practically done beyond $n \approx 10^4$.  

\section{Scalable Strategy Optimization and Noise Generation}

In this section, we propose two approaches to \emph{implicitly} compute the objective function, bypassing the need to materialize $n \times n$ matrices explicitly.  These innovations enable scalability up to $n \approx 10^5$ and $n > 10^7$ respectively.  For simplicity, we specialize the presentation to the Prefix workload (lower triangular matrix of ones), banded strategies, and the expected total squared error objective (\cref{prop:expected_error}).  In \cref{sec:generalizations}, we show that the same core approach taken here easily translates to more general classes of workloads, strategies, and objective functions.

\subsection{Efficient strategy optimization} \label{sec:implicit}

The complexity of solving \cref{prob:bandedopt} using L-BFGS is proportional to the complexity of evaluating the objective function and its corresponding gradient.  
Our first key idea to scale up \bandmf is to utilize \cref{alg:invlintrans} to efficiently evaluate the objective function.  
Specifically, we will materialize one row of $\bfA \bfC^{-1}$ at a time and compute the norm in an online fashion as shown in \cref{alg:efficientobj}. 

Let $\bfTheta$ be a $ b \times n$ matrix of parameters which will define our banded strategy matrix $\bfC(\bfTheta)$.  We define $\bfC(\bfTheta)$ to be the column-normalized $b$-banded matrix satisfying: $ C_{ij} = \nicefrac{\Theta_{(j-i+1)j}}{\sqrt{\sum_{j=1}^b \Theta_{ji}^2}}$.  By construction, we have $\norm{\bfC(\bfTheta)}_{1,2} = 1$.  Thus, to evaluate the objective function we simply need to compute $\norm{\bfA \bfC(\bfTheta)^{-1}}_F^2$, and \cref{alg:efficientobj} gives an algorithm for doing that efficiently.  The key idea is to generate rows of $\bfA \bfC(\bfTheta)^{-1}$ one at a time and compute the Frobenius norm in a streaming manner.  

\begin{figure}[!t]
\begin{algorithm}[H]
\caption{\label{alg:efficientobj} (Efficient) Expected Total Squared Error}
\begin{algorithmic}
\State \textbf{Input:} $\bfTheta \in \mathbb{R}^{b \times n}$
\State \textbf{Output:} $\norm{ \bfA \bfC(\bfTheta)^{-1} }_F^2$
\State $\bfb_0 = \mathbf{0}$
\State loss $=0$
\For{$i=1, \dots, n$}
\State $\bfe_i = $ $i^{th}$ row of the $n \times n$ identity matrix
\State $c_i = \sqrt{ \sum_{j=1}^b \Theta_{ji}^2 }$ \Comment{Column normalization constant}
\State $\bfy_i = c_i
(\bfe_i - \sum_{j=1}^{b} \Theta_{ji} \bfy_j) / \Theta_{1i} $ \Comment{$i^{th}$ row of $\bfC(\bfTheta)^{-1}$}
\State $ \bfb_i = \bfb_{i-1} + \bfy_i$ \Comment{$i^{th}$ row of $\bfA \bfC(\bfTheta)^{-1}$}
\State loss += $\bfb_i^\top \bfb_i$ \Comment{partial squared Frobenius norm}
\EndFor
\State \textbf{return} loss
\end{algorithmic}
\end{algorithm}
\end{figure}
Each iteration of \cref{alg:efficientobj} requires $O(n \cdot b)$ time, as the vectors $\bfe_i$ and $\bfy_i$ are size $n$, and there is a sum over $b$ such vectors.  Thus, the total time complexity of \cref{alg:efficientobj} is $O(n^2 \cdot b)$.  The algorithm must keep track of $\bfy_{i-b}, \dots, \bfy_{i}$ in each iteration, and hence the space complexity is $O(n \cdot b)$.  This is a factor $\frac{n}{b}$ improvement in both the time and space complexity of \bandmf, which can be substantial when the number of bands is small (which it often is, as we demonstrate in \cref{sec:experiments}).  While \cref{alg:efficientobj} is specialized to the class of (column-normalized) banded strategies and the Prefix workload, the core approach is easy to generalize to other parameterized classes of strategies and more general workloads, as we discuss in \cref{sec:generalizations}.

Our L-BFGS algorithm also requires the gradient of the objective function to run efficiently.  We do not derive that here, but rather rely on Jax's reverse-mode auto-differentiation capabilities instead \citep{jax2018github}.  Depending on how the gradients are computed, the gradient computation can be more time and memory intensive than the loss calculation.  We discuss this more in depth in \cref{sec:gradients} and provide some experiments on the scalability of this approach in \cref{sec:experiments}.  Using these ideas, we can scale $\bandmf$ up to roughly $n \approx 10^5$ under reasonable time and memory constraints, without sacrificing any solution quality over prior work \citep{choquette2024amplified}.

\subsection{Optimizing Banded Toeplitz Strategies} \label{sec:toeplitz}

We now explore another technique that is even more scalable, while only sacrificing a small amount in terms of expected error.  Our key idea is to restrict attention to the special class of \emph{banded Toeplitz strategies}.  This design decision was inspired by manual inspection of the optimal dense strategies, observing that they exhibit a near-Toeplitz structure.  By exploiting the special structure of this class of strategies, we can evaluate the objective function using only $O(n \cdot b)$ time and $O(n)$ space, and scale strategy optimization up to and beyond $n \approx 10^6$, as we show below:

\begin{prop}[Banded Toeplitz Expected Total Squared Error] \label{prop:toeplitz_error}
Let $\bftheta \in \mathbb{R}^b$ be Toeplitz coefficients, such that $\bfC(\bftheta)_{ij} = \bftheta_{i-j+1}$, and let $\bfw = \bfC(\bftheta)^{-1} \mathbf{1} $. The expected total squared error is equal to:
$$\norm{\bfC(\bftheta)}_{1,2} = \norm{\bftheta}_2 \;\;\;\;\;\;\;\; \norm{\bfA \bfC(\bftheta)^{-1}}_F^2 = \sum_{i=1}^n (n-i+1) w_i^2 $$
\end{prop}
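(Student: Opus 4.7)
The plan is to establish the two identities separately; both exploit the lower-triangular Toeplitz structure of $\bfC(\bftheta)$ in an essential way.

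For the sensitivity identity $\norm{\bfC(\bftheta)}_{1,2} = \norm{\bftheta}_2$, I would simply enumerate the columns of $\bfC(\bftheta)$. By construction, column $j$ has nonzero entries $\bftheta_1, \bftheta_2, \dots, \bftheta_{\min(b,\, n-j+1)}$ sitting in rows $j, j+1, \dots, \min(j+b-1, n)$, and is zero elsewhere. For columns with $j \leq n-b+1$ the entire vector $\bftheta$ fits, giving $\ell_2$ column norm exactly $\norm{\bftheta}_2$; later columns contain only a prefix of $\bftheta$ and hence have strictly smaller norm. Assuming $n \geq b$ (the only regime of practical interest), the maximum over columns is therefore $\norm{\bftheta}_2$.

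For the Frobenius norm identity, the key structural fact is that the $n \times n$ lower-triangular Toeplitz matrices form a commutative subalgebra of $\mathbb{R}^{n \times n}$, closed under multiplication and inversion. This can be seen abstractly through the isomorphism with $\mathbb{R}[z]/(z^n)$, or concretely by noting that every such matrix is a polynomial in the nilpotent lower-shift matrix. Consequently, $\bfC(\bftheta)^{-1}$ is itself lower-triangular Toeplitz and commutes with the prefix workload $\bfA$ (which is the lower-triangular Toeplitz matrix whose first column is $\mathbf{1}$). The product $\bfA \bfC(\bftheta)^{-1}$ is thus also lower-triangular Toeplitz, and is fully determined by its first column. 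Using commutativity, that first column equals
$$\bfA \bfC(\bftheta)^{-1} \bfe_1 \;=\; \bfC(\bftheta)^{-1} \bfA \bfe_1 \;=\; \bfC(\bftheta)^{-1} \mathbf{1} \;=\; \bfw.$$
Hence $(\bfA \bfC(\bftheta)^{-1})_{ij} = w_{i-j+1}$ for $i \geq j$ and $0$ otherwise. A simple counting step then yields the stated sum: the value $w_m$ occupies the $m$-th diagonal, which has length $n - m + 1$, so $\norm{\bfA \bfC(\bftheta)^{-1}}_F^2 = \sum_{m=1}^n (n - m + 1)\, w_m^2$.

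The main obstacle I expect is cleanly justifying that $\bfC(\bftheta)^{-1}$ is again lower-triangular Toeplitz and commutes with $\bfA$. Once this closure-under-inversion and commutativity are in place, both identities reduce to bookkeeping over the banded/Toeplitz structure of $\bfC(\bftheta)$.
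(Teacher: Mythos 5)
Your proposal is correct and follows essentially the same route as the paper: the Frobenius-norm identity is obtained exactly as in the paper's proof, by noting that lower-triangular Toeplitz matrices form a commutative class closed under inversion, so $\bfA\bfC(\bftheta)^{-1}$ is lower-triangular Toeplitz with first column $\bfw=\bfC(\bftheta)^{-1}\mathbf{1}$, and then counting that $w_i$ appears $(n-i+1)$ times. Your explicit column-by-column verification of $\norm{\bfC(\bftheta)}_{1,2}=\norm{\bftheta}_2$ (under the natural assumption $n\geq b$) is a small addition the paper leaves implicit, and is fine.
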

\begin{proof}
The first expression states that the maximum $L_2$ column norm of a lower triangular Toeplitz matrix is equal to the norm of it's first column $\bftheta$.  By direct inspection, the first $n-b+1$ columns of $\bfC(\bftheta)$ all share the same non-zero entries of $\bftheta$, and the norms of these columns are thus $\norm{\bftheta}_2$.  The non-zero entries of the remaining $b-1$ columns of $\bfC(\bftheta)$ correspond to a subvector of $\bftheta$, and hence these columns have norm $\leq \norm{\bftheta}_2$. 
The second expression states that the Frobenius norm can be calculated without explicitly materializing any matrices.  
Note that both $\bfA$ and $\bfC(\bftheta)$ are lower triangular Toeplitz matrices, and 
if $\bfC(\bftheta)$ is invertible, then its inverse is also a lower triangular Toeplitz matrix \citep[Lemma 5]{lin2008explicit}.  Second note that multiplication is commutative within this class, and therefore $\bfA \bfC^{-1}(\bftheta) = \bfC^{-1}(\bftheta) \bfA$, and this product is itself a lower triangular Toeplitz matrix \citep[Lemma 5]{lin2008explicit}.  As such, it is completely defined by its first column $\bfw$.  Using the definition of matrix multiplication we can obtain $\bfw$ by multiplying $\bfC^{-1}(\bftheta)$ by the first column of $\bfA$, which is $\mathbf{1}$ \citep[Page 97]{lay2003linear}.  The squared Frobenius norm of a Toeplitz matrix with parameters $\bfw$ can be calculated directly over this vector by observing that $w_i$ is repeated along the $i^{th}$ diagonal band of the Toeplitz matrix and hence appears $(n-i+1)$ times, leading to the expression stated above for the squared Frobenius norm of $\bfA \bfC^{-1}(\bftheta)$.  
\end{proof}

We note that this approach generalizes to any Toeplitz-structured workload, as we show in \cref{sec:generalizations}.
The complexity of evaluating the objective function is determined by the cost of solving the linear Toeplitz system $\bfw = \bfC(\bftheta)^{-1} \mathbf{1} $, which can be done with \cref{alg:invlintrans} in $O(n \cdot b)$ time and $O(b)$ space.  In \cref{sec:approx_toeplitz} we further reduce the time complexity to $O(b^3)$ by observing that the sequence $w_i$ rapidly converges to a fixed point, which allows scalability to virtually any $n$ (as long as $b$ is not too large).

\paragraph{Column Normalization} 
In general the column norms of Toeplitz matrices are not all equal, a property that optimal strategies are known to have in the single-participation setting \citep{zhang2018ektelo,yuan2016convex,mckenna2018optimizing} and a property built-in to \bandmf \citep{choquette2024amplified}.  When the number of bands is small, the loss in solution quality from not column normalizing is small, as the first $n-b$ columns of a banded Toeplitz strategy already have equal norm.  Moreover, we can always normalize the columns of the banded Toeplitz matrices as a post-processing step after strategy optimization, which we recommend in practice.

\subsection{Distributed Noise Generation} \label{sec:noise_generation}

At training time, \bandmf incurs a $b \cdot d$ memory overhead where $b$ is the number of bands and $d$ is the number of model parameters.   In large-scale settings $d$ may exceed $10^9$, and assuming $b = 256$ and 4-byte floats are used, this is a very significant one \emph{terabyte} cost.  In this section, we will argue that with a careful implementation, this is not a major issue in typical large-scale training regimes.

We make two key observations.  First, in order to train large models with differential privacy, large batch sizes are typically needed, which requires a combination of coordinated \emph{training across many accelerators} and virtual batching \citep{de2022unlocking,anil2021large}.  For example, to train a 1 billion parameter model with DP, one might use $1024$ machines and $16 \times$ virtual batching to attain a reasonable batch size of $16384$ (assuming a per core batch size of $1$).  Second, we can effectively utilize multiple machines when generating noise by observing that \cref{alg:invlintrans} can be translated to an \emph{embarrassingly parallel} algorithm since each coordinate of the noise vectors are treated identically.  

\begin{wrapfigure}{r}{0.23\textwidth}
\vspace{-1.5em}
\includegraphics[width=0.23\textwidth]{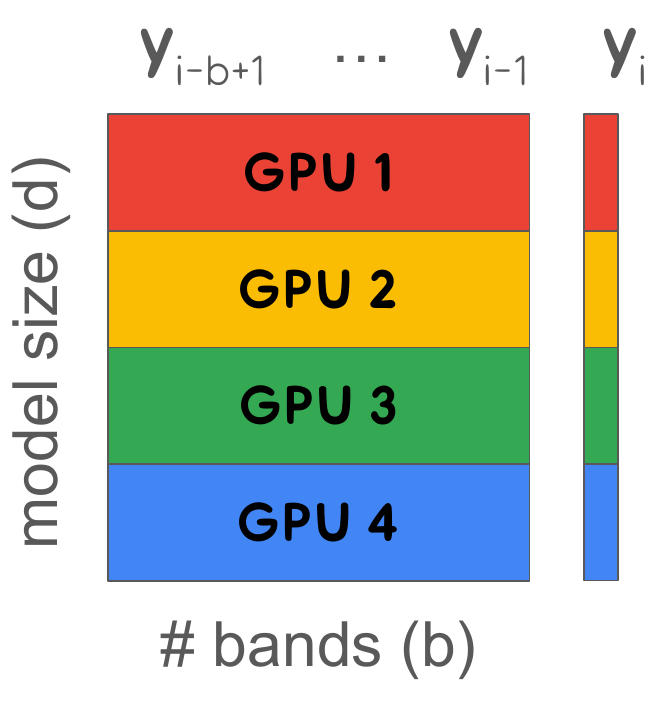}
\vspace{-2em}
\end{wrapfigure}
In our implementation, each machine is in charge of producing a different shard of the noise vectors $\bfy_i$, and they keep track of the state required to generate that noise locally.\footnote{this strategy for distributing noise generation is very different from distributed differential privacy \citep{kairouz2021distributed}, where each client/machine generates an entire vector of noise with smaller variance.}   This state corresponds to the same shard of the previous $b-1$ correlated noise vectors ($\bfy_{i-1}, \dots, \bfy_{i-b+1}$), and this sharding strategy is illustrated in the figure to the right.  This benefit of this sharding strategy is that no communication is needed between machines until the noise is added to the clipped + aggregated gradient.  Using the example above, this implementation would have a very manageable overhead of one GB per machine (one TB split across $1024$ machines).

In \cref{sec:experiments:numbands} we demonstrate that in practice the number of bands needed is typically far less than $256$, and that the overhead incurred by our distributed correlated noise generation algorithm is small compared to the cost of per-example gradient clipping, even when the number of bands is large. To better understand the scalability limits of this distributed noise generation approach in different settings, it is useful to look at a few more illustrative examples.

\begin{example}
Suppose we want to train a 4 million parameter model on a single GPU with a memory limit of $1$ GB.  We can afford to store roughly $64$ copies of the model assuming each parameter is represented as a $4$-byte float.  This allows us to use roughly $64$ bands overall.
\end{example}

\begin{example}
Now suppose we want to train a larger $128$ million parameter model in parallel on $64$ machines, which is useful/needed to get the large batch sizes usually used with large models and DP training.  By parallelizing the noise generation code across these $64$ machines, we can afford to store $2$ copies of the model per machine, allowing us to use roughly $128$ bands overall.
\end{example}

\begin{example}
Now suppose we want to finetune an $8$ billion parameter model on $64$ machines, and use a parameter-efficient fine-tuning method such as LoRA \citep{hu2021lora,yu2021differentially}.  Further, assume the number of learnable finetuning parameters is $4$ million.  Now, the cost of the forward/backward pass completely dwarfs any overhead of noise generation.  We can store $64$ copies of the model per machine, and parallelize computation across $64$ machines, allowing us to run \bandmf with up to $4096$ bands.
\end{example}

\section{Empirical Results} \label{sec:experiments}

In this section, we empirically evaluate our proposed strategy optimization algorithms in terms of solution quality and scalability.  Then, we conduct experiments and perform analysis to understand the optimal number of bands as a function of the relevant parameters.  Our analysis reveals that in many scenarios of practical interest the optimal number of bands is small, and consequently the overhead of \bandmf at training time is small.  Our experiments focus on root-mean-squared-error (RMSE), which we compute in closed form using \cref{prop:expected_error}, dividing by $n$ and taking the square root.  Strategy optimization is done on an NVIDIA V100 Tensor Core GPU for up to 10K iterations.

\subsection{Comparison to Prior and Concurrent Work}

We begin by evaluating the solution quality of our strategies compared to prior and concurrent work.  We provide a thorough qualitative comparison between the mechanisms considered here in \cref{sec:qualatative_compare}.  In \cref{fig:suboptimality_n}, we fix $b=16$ while varying $n$, and compare our strategies obtained through implicit optimization to the strategies produced through explicit optimization as in prior work \citep{choquette2024amplified}.  We also include \bandsqrt as an additional baseline, which was proposed in concurrent work \citep{kalinin2024banded}, and avoids dense matrix representations through the same banded Toeplitz structure we consider.  We plot on the y-axis the \emph{RMSE Suboptimality Ratio}, which we define as the ratio of RMSE between a given strategy and the best available strategy for that setting.  Because all the strategies in this figure are $16$-banded, they enjoy the same privacy properties (whether they are used in an amplified or non-amplified setting), and therefore the RMSE Suboptimality Ratio's we report hold for all $\epsilon$.  The highlights from this experiment are four-fold:

\begin{figure}[t]
    \centering
    \begin{subfigure}[t]{0.42\linewidth}
        \includegraphics[width=\linewidth]{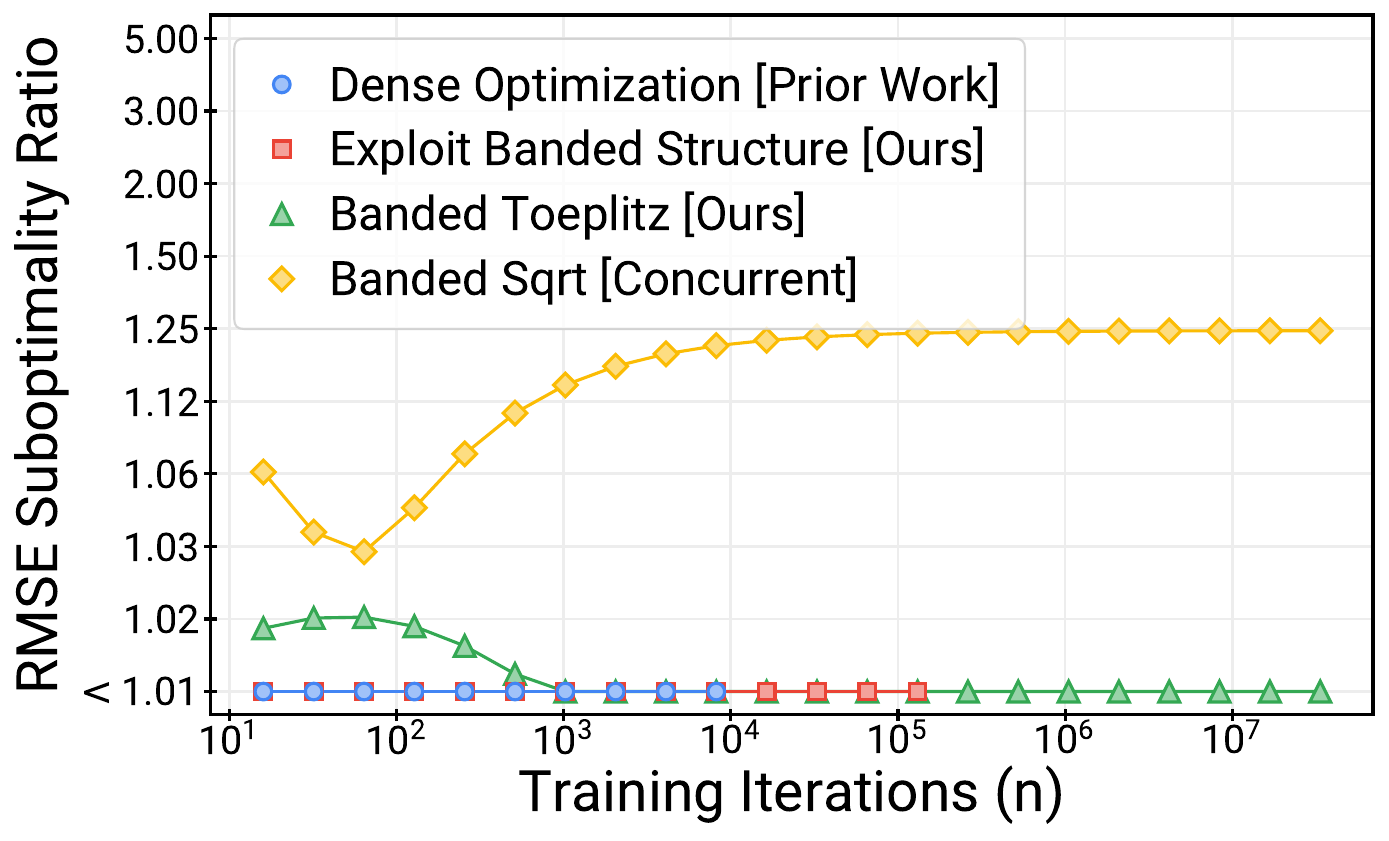}
        \caption{$b=16$}
        \label{fig:suboptimality_n}
    \end{subfigure}
    \begin{subfigure}[t]{0.55 \linewidth}
        \includegraphics[width=\linewidth]{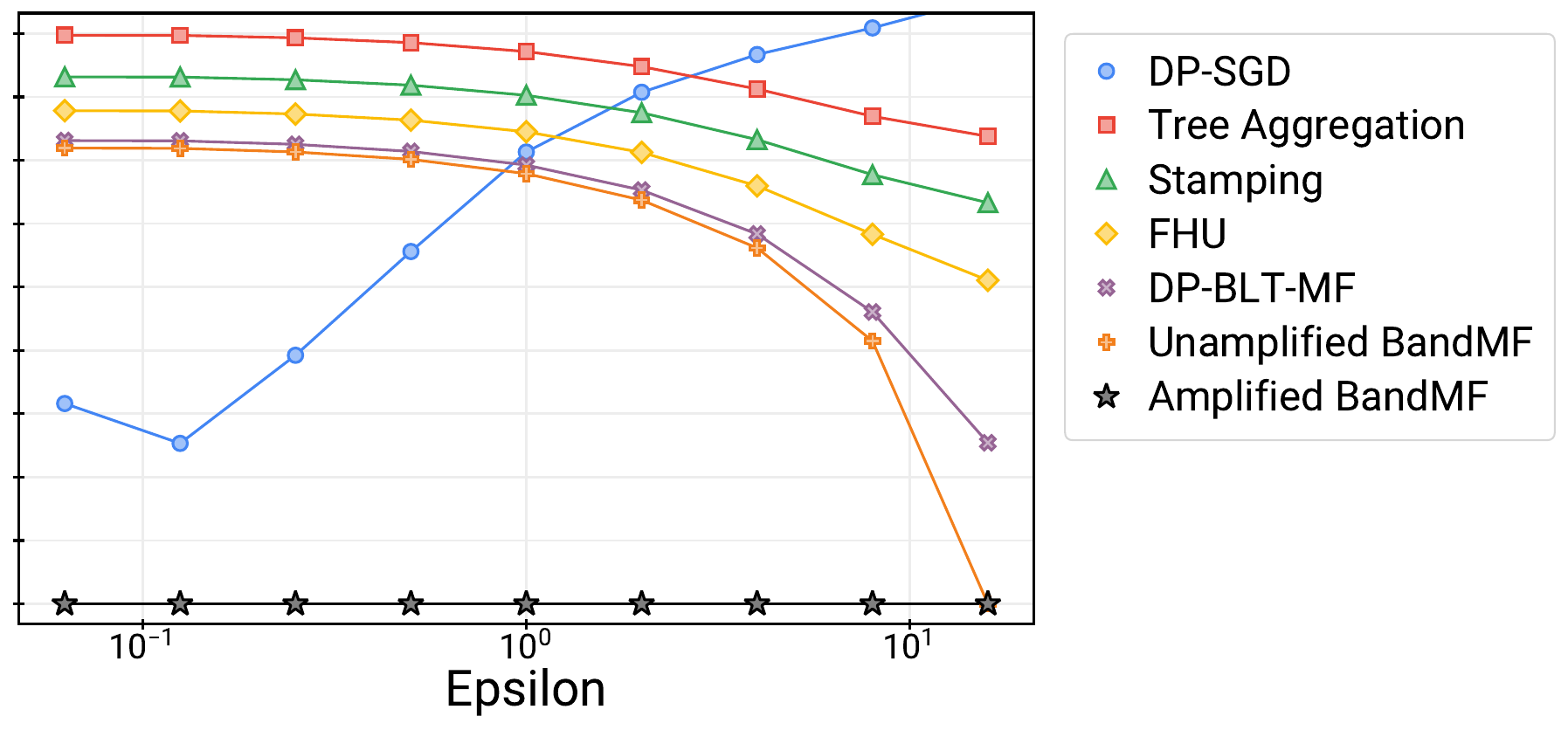}
        \caption{$n=16384, k=8 \qquad\qquad\qquad\qquad$}
        \label{fig:baseline_compare}
    \end{subfigure}
    \caption{(a-b) Ratio of RMSE of each strategy to the best strategy that scales to each setting.  (a) Compares our scalable banded and banded toeplitz strategies with other banded strategies.  (b) Compares our scalable \bandmf with non-banded strategies as a function of $\epsilon$.}
\end{figure}

\begin{itemize}[leftmargin=*]
    \item The prior work that represents the strategy using dense matrices scaled up to $n < 10^4$, while our implicitly optimized banded strategies scaled up to $n>10^5$, and our banded Toeplitz strategies scaled well beyond $n>10^7$.  We provide more detailed scalability results in \cref{fig:scalability} of \cref{sec:appendix:experiments}.
    \item Our implicit strategy optimization algorithm for general banded matrices converges to the same solution as prior work \citep{choquette2024amplified} for all settings tested (where the latter successfully ran).  This is encouraging in light of the fact that we directly optimize $\bfC$ (rather than $\bfX = \bfC^\top \bfC$) and \cref{prob:bandedopt} is not convex with respect to $\bfC$.
    \item The \bandsqrt baseline approach \citep{kalinin2024banded} achieves worse expected error across all settings than both of our approaches.  The suboptimality approaches up to $25\%$ in the settings we tested, with larger suboptimality for larger $n$.
    \item Our (column-normalized) Toeplitz strategies are between $0$-$2\%$ suboptimal, with suboptimality increasing with the number of bands and decreasing with the number of iterations.  For large $n \geq 16384$ and small $b \leq 32$, which is the regime of most interest, the suboptimality is $ \leq 0.25\%$, indicating arbitrary banded strategies do not provide much benefit over the more restrictive class.
\end{itemize}

In \cref{fig:baseline_compare} we compare against several additional baseline mechanisms for $n=16384$ iterations and $k=8$ epochs while varying $\epsilon$.  We include results for \dpsgd \citep{abadi2016deep}, Tree Aggregation \citep{kairouz2021practical}, Stamping \citep{denisov2022improved}, FHU \citep{henzinger2022constant}, Buffered Toeplitz, \citep{mcmahan2024efficient,mcmahan2024hassle}, and our Unamplified and Amplified scalable \bandmf approach.  We omit from comparison \dpmf \citep{choquette2022multi} due to scalability limitations of that approach for $n \geq 16384$.  

Here, \dpsgd and Amplified \bandmf benefit from privacy amplification, while the other mechanisms do not.  Our highlights from this experiment are two-fold:

\begin{itemize}[leftmargin=*]
\item Amplified \bandmf is better than all other mechanism across all settings evaluated.  The improvement over \dpsgd grows with $\epsilon$, while the improvement over other mechanisms decreases with $\epsilon$.  The best alternatives are either \dpsgd or Buffered Toeplitz, both of which have $\approx 2\times$ worse RMSE for $\epsilon = 1$.  Amplified \bandmf still enjoy a meaningful $19\%$ better RMSE than the best competitor for $\epsilon$ as large as $8$.
\item Unamplified \bandmf is the best non-amplified mechanism that can scale to this setting, and enjoys $\sim 5\%$ lower RMSE than Buffered Toeplitz strategies for all $\epsilon$.  While our primary focus has been in centralized training regimes, this figure shows that \bandmf is also a state-of-the-art approach in federated scenarios where amplification is not feasible \citep{kairouz2021practical}.
\end{itemize}

While we focused on a single value of $n$ and $k$ in \cref{fig:baseline_compare}, our general findings remain unchanged for other settings, although the magnitude of the improvements do change.  For completeness, results for other values of $n$ and $k$ are shown in \cref{fig:more_baselines} in \cref{sec:appendix:experiments}.

\subsection{Optimal Number of Bands} \label{sec:experiments:numbands}

We now perform an ablation on \bandmf to understand how to select the number of bands in practice, and how the performance of \bandmf changes as a function of the number of epochs.  \cref{fig:bands_vs_rmse} plots the RMSE suboptimality ratio of \bandmf relative to the full-batch \dpsgd baseline for different values for epochs and bands fixing at $\epsilon = 1$.  \cref{fig:bands_vs_epochs_vs_epsilon} plots the optimal number of bands for different values of epochs and $\epsilon$.  Our findings from these experiments are three-fold:

\begin{figure}[t]
  \centering
  \begin{subfigure}[c]{0.5\linewidth}
    \includegraphics[width=\linewidth]{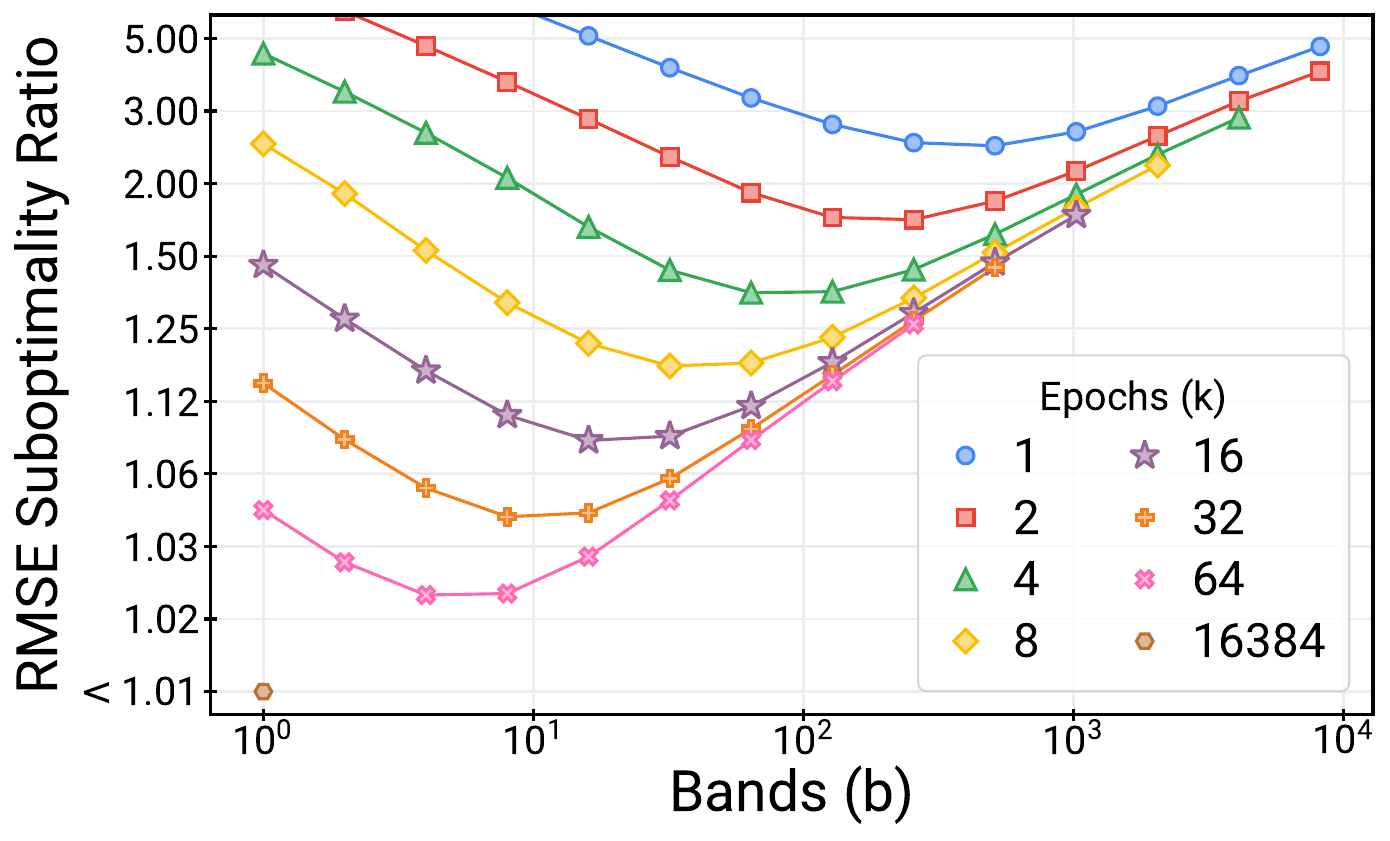}
    \caption{Bands vs. RMSE}
    \label{fig:bands_vs_rmse}
  \end{subfigure}%
  \begin{subfigure}[c]{0.5\linewidth}
    \includegraphics[width=\linewidth]{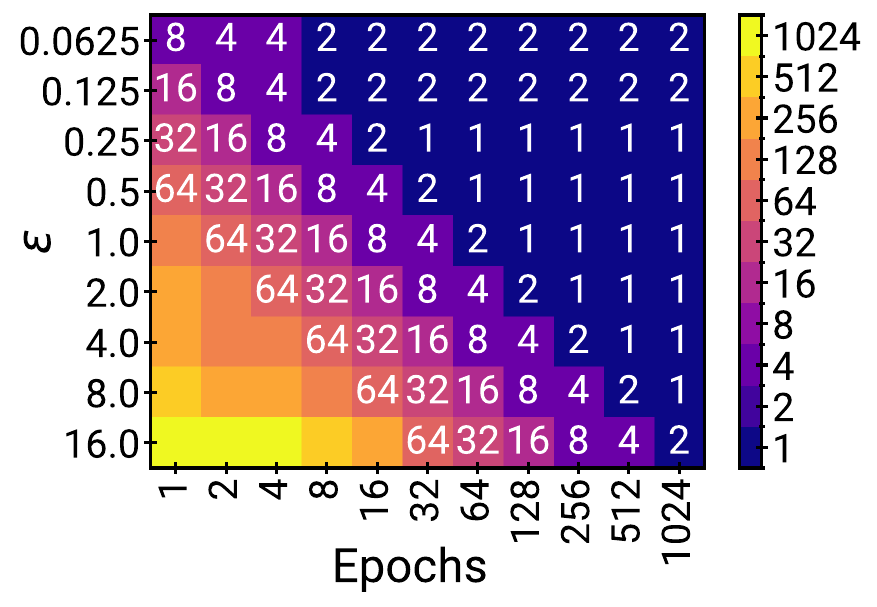} 
    \caption{Optimal Number of Bands}
    \label{fig:bands_vs_epochs_vs_epsilon}
  \end{subfigure}
  \caption{(a) RMSE Suboptimality Ratio (relative to full-batch \dpsgd) of \bandmf as a function of $b$ for various epochs, with fixed $(\epsilon, \delta) = (1, 10^{-8})$ and $n=16384$.
  (b) Optimal number of bands (within a factor of 2) as a function of the privacy budget and the number of epochs, fixing $n=4096$ and $\delta=10^{-8}$.
  }
\end{figure}
\begin{itemize}[leftmargin=*]
    \item With only $32$ epochs, \dpsgd ($b=1$) has nearly the same RMSE as full-batch \dpsgd ($16384$ epochs), but the RMSE degrades rapidly with fewer epochs.  Unamplified \bandmf ($b = \nicefrac{n}{k}$) achieves worse RMSE than \dpsgd when the number of epochs is large (and utility is the best), but it degrades more gracefully than \dpsgd, offering better utility in the few-epoch regime.  Amplified \bandmf ($b = b_*$) offers the best of both worlds: full-batch \dpsgd-level performance when the number of epochs is sufficiently large, and even more graceful degradation of RMSE with respect to number of epochs than Unamplified \bandmf.
    \item The RMSE improves predictably with increasing epochs, but there are diminishing marginal returns.  For example, the RMSE of Amplified \bandmf is $< 2\times$ worse than the RMSE of full-batch \dpsgd with only two epochs, and $< 18\%$ worse with only eight epochs.  We note that RMSE only accounts for the noise added due to privacy; increasing the number of epochs under a fixed number of iterations increases the batch size, which also reduces the minibatch gradient variance.  
    \item The optimal number of bands drops predictably with increasing epochs and decreasing epsilon.  Specifically, the dependence appears roughly linear in both parameters, and a good rule of thumb is that $b_* \approx \nicefrac{\epsilon \sqrt{n}}{k}$ should be near-optimal.  In some parameter regimes, like $k=1, \epsilon \geq 16$, $b_*$ may be too large to feasibly handle.  However, Amplified \bandmf with a suboptimal number of bands is still better than the alternative of \dpsgd.
\end{itemize}

The analysis above reveals what the optimal number of bands is for different settings, but not whether it is feasible to run the mechanism for that many bands.  In \cref{fig:noise_generation time} we plot the wall clock time to generate correlated noise as a function of the number of bands in a typical distributed training scenario with a 100M parameter \texttt{BertBase} model trained on $32$ accelerators.  Our primary finding for this experiment is:

\begin{itemize}[leftmargin=*]
    \item Using our distributed algorithm, correlated noise generation is not the primary bottleneck, even when the number of bands is fairly large (up to $b=128$, corresponding to $1.6$ GB overhead per accelerator).  The cost of per-example gradient clipping is one- to three- orders of magnitude more expensive, depending on the batch size.
\end{itemize}

\subsection{RMSE vs. Learning Performance} \label{sec:learning_performance}

\begin{figure}[t]
  \centering
  \begin{subfigure}[t]{0.34\linewidth}
    \includegraphics[width=\linewidth]{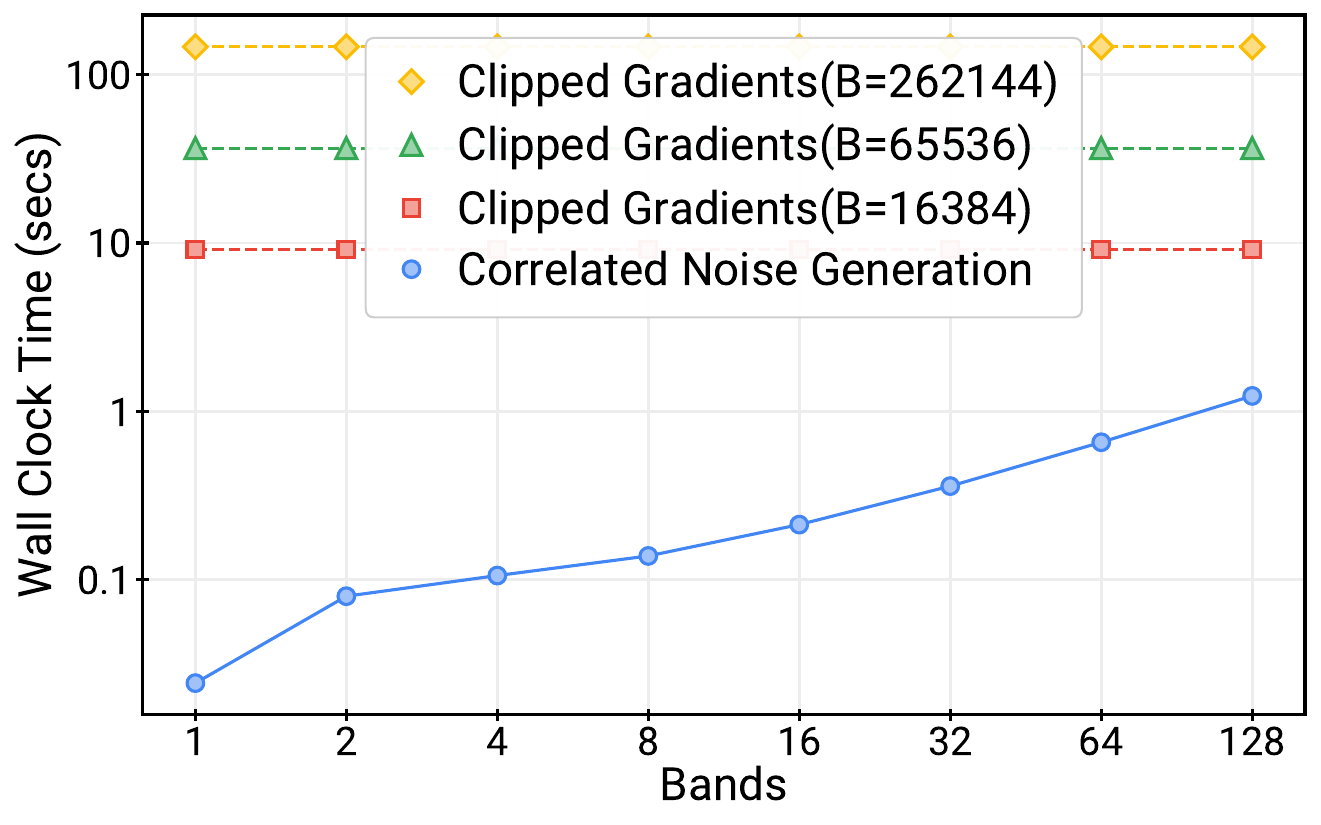}
    \caption{Noise Generation Time}
    \label{fig:noise_generation time}
  \end{subfigure}%
  \begin{subfigure}[t]{0.34\linewidth}
    \includegraphics[width=\linewidth]{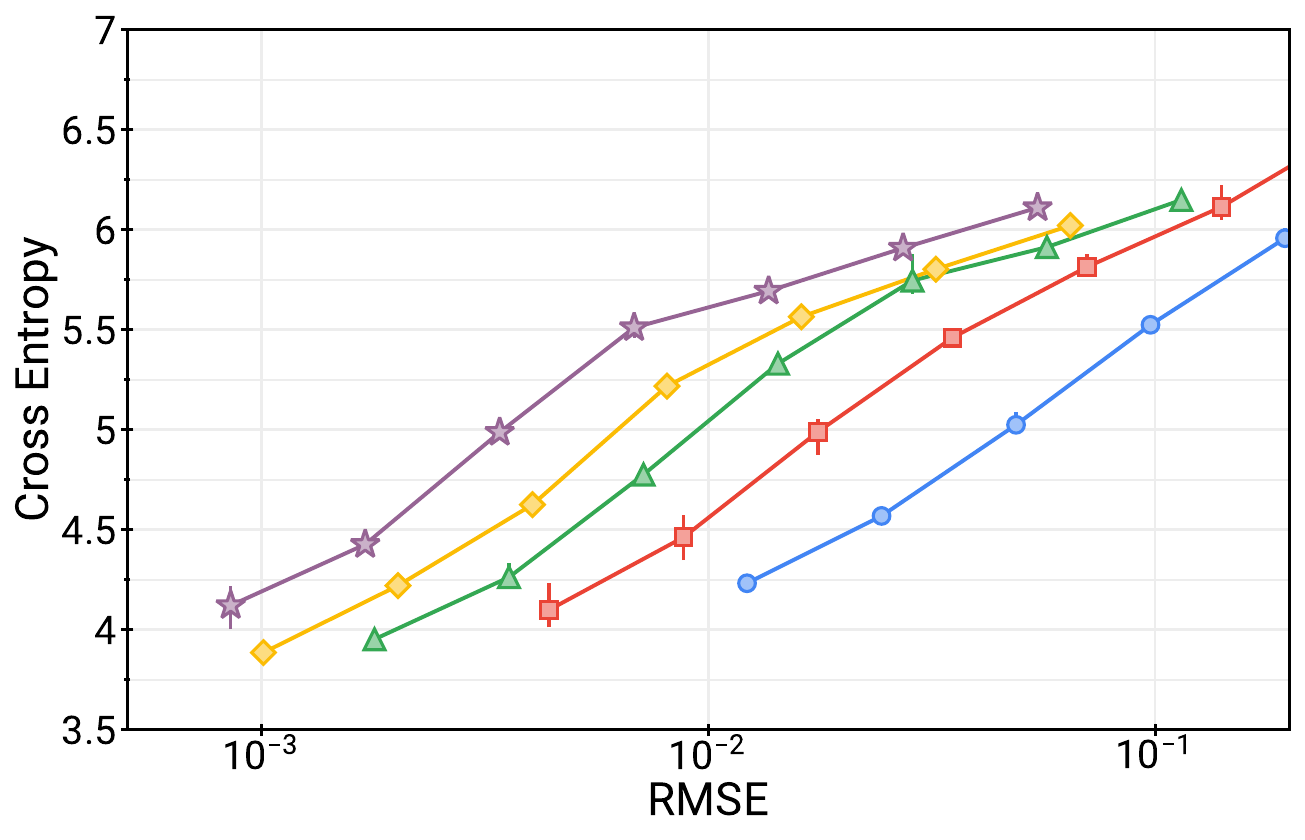}
    \caption{\label{fig:adaptive} Adaptive Optimizer}
  \end{subfigure}%
  \begin{subfigure}[t]{0.31\linewidth}
    \includegraphics[width=\linewidth]{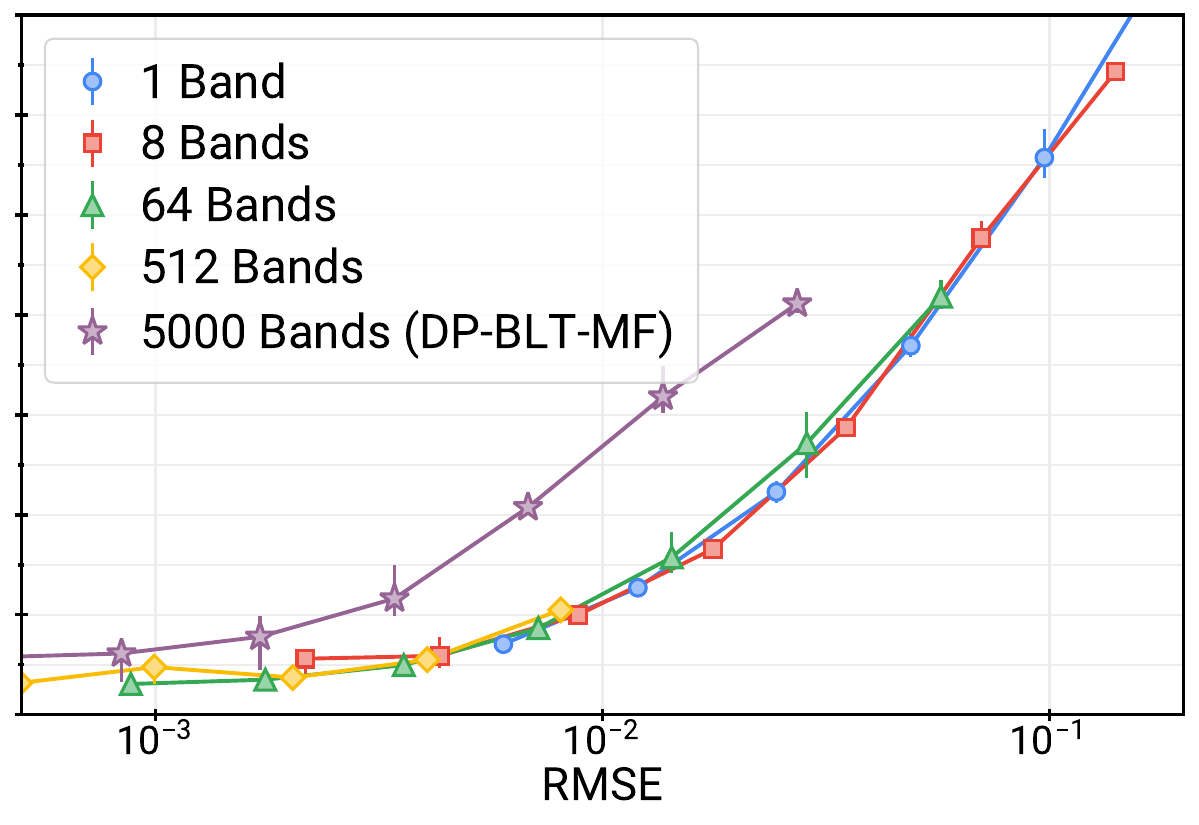}
    \caption{Non-Adaptive Optimizer}
    \label{fig:nonadaptive}
  \end{subfigure}
  \caption{(a) Wall Clock Time for correlated noise generation and per-example gradient clipping for a 100M parameter \texttt{BertBase} model when run on $32$ TPU v3 cores.  (b-c) RMSE vs. Learning Performance (evaluation cross entropy) with an adaptive optimizer (b) and a non-adaptive optimizer (c). In both (b) and (c), a 4M parameter \texttt{BertTiny} model is trained on the StackOverflow dataset for various noise multipliers.}
  \label{fig:rmse_vs_xent}
\end{figure}

Thus far we have primarily focused on RMSE on the prefix workload, but this is just a proxy for what we really care about: training-time learning performance.  In \cref{fig:rmse_vs_xent}(b-c) we aim to understand the relationship between RMSE and learning performance across a range of \dpmffamily strategies.  Our goal is not to compare different mechanisms across a different privacy budgets, as this has been done in prior work \citep{choquette2024amplified,choquette2023correlated,denisov2022improved}, and our mechanism is just a more scalable instantiation of the prior work.

We consider strategies with differing numbers of bands, and for each setting we optimize the strategy for the prefix workload.  Note this is different from early work on matrix factorization where the strategy is optimized for a workload that encodes the optimizer parameters like momentum and learning rate cooldown \citep{denisov2022improved}.  Recent work on \bandmf suggest it is fine to configure the strategy for the prefix workload even when using SGD with momentum in practice \citep[Section I]{choquette2024amplified}.

For each strategy under consideration, we train a \texttt{BertTiny} model with per-example clipping and correlated noise addition with a range of noise multiplies spanning 3 orders of magnitude.  We tune the learning rate for each setting and report results for the best value found.  We each experiment, we record the RMSE of the strategy / noise multiplier along with the evaluation set cross entropy, and plot the results in \cref{fig:adaptive,fig:nonadaptive}.  Our two main findings from this experiment are:

\begin{itemize}[leftmargin=*]
\item For an adaptive optimizer and the same RMSE, \emph{strategies with fewer bands provide better learning performance}, indicating that RMSE is not the best proxy for learning performance across multiple strategies.  These results suggest that one should set bands more conservatively than suggested by RMSE alone to optimize learning performance if using \bandmf with an adaptive optimizer.
\item For a non-adaptive optimizer, the lines for $1$-, $8$-, $64$-, and $512$-banded strategies all line up, indicating that RMSE is a reasonable predictor of learning performance.  However, there is still a gap between $\bandmf$ and \blt ($5000$ bands), where fewer bands achieves better learning performance for the same RMSE.
\end{itemize}

While this experiment focused RMSE, we also tested the effectiveness of max error as a proxy for learning performance in \cref{fig:rmse_vs_xent_max} in \cref{sec:appendix:experiments}, with similar observations holding there as well.

\section{Related Work}

\paragraph{More \dpmffamily Variants} \citet{koloskova2024gradient,choquette2023correlated} revisit the objective underlying \dpmffamily and propose new variants based on their analysis.  Like us, \citet{henzinger2023almost} and \citet{choquette2023correlated} use Toeplitz strategies, and bypass the $O(n^3)$ runtime of strategy optimization.  However, they consider ``full'' Toeplitz matrices and do not address the $O(n \cdot d)$ training-time memory overhead of the algorithm, preventing scalability in large-scale settings.  

\paragraph{Scalable Approaches to the Matrix Mechanism}

The optimization problem we studied in this work dates back to the Matrix Mechanism \citep{li2010optimizing}, and has been the subject of intense research for a different problem domain: answering workloads of linear counting queries over discrete databases \citep{li2010optimizing,li2015matrix,yuan2016convex,mckenna2018optimizing,mckenna2021hdmm,xiao2024optimal,nikolov2013geometry,edmonds2020power,mckenna2020workload,li2013optimal,xiao2024optimal}.  \citet{mckenna2021hdmm} and \citet{xiao2024optimal} restrict attention to special classes of strategies that enable efficient strategy optimization.  By exploiting the special structure in these strategies, they scale well beyond $n > 10^7$, but the specific strategies considered in those works do not directly apply in the context of iteratively training ML models.  

Concurrently with our work, \citet{mcmahan2024efficient} proposed \blt, a memory-efficient approximation of single-epoch \dpmf, and extended to handle multiple epochs in follow-up work \citep{mcmahan2024hassle}. Their method scales to an arbitrary number of training iterations, and reduces the training-time memory complexity to $O(c \cdot d)$ for small and configurable $c \approx 3$, while only sacrificing 5-7.5\% in RMSE compared to Unamplified \bandmf.  



\section{Limitations} \label{sec:limitations}

Here we discuss several limitations of this paper.
First, the number of bands needed to maximize utility may differ from the number of bands that is feasible to use under compute constraints.  In \cref{sec:experiments:numbands} we provided an argument that in the regimes of most practical interest, the number of bands will be small.  However, it is conceivable for there to be situations when the optimal number of bands is larger than what can be supported under compute constraints.

Second \bandmf relies on Poisson sampling to form minibatches, which can be tricky to integrate into modern ML infrastructure due to variable batch sizes.  

Third, this work and nearly all work on DP Matrix Factorization relies on the assumption that expected error on the prefix sums of gradients is a good proxy for learning performance \citep{kairouz2021practical,denisov2022improved,choquette2023correlated,mcmahan2024efficient,kalinin2024banded}.  As demonstrated in \cref{sec:learning_performance}, this proxy is not perfect, particularly when combined with adaptive optimizers.  More research is needed to improve the objective function underlying \dpmffamily-style mechanisms, and how to configure them with adaptive optimizers.  

Finally, our distributed noise generation procedure relies on the assumption that the worker machines responsible for computing each shard of the noise are not compromised, otherwise the privacy guarantees could be at risk.  In situations where noise must be generated on a single machine, \bandmf may not be able to scale to as many bands.  However, as we showed throughout \cref{sec:experiments}, \bandmf tends to work best with a small number of bands anyway.  

\section{Conclusion}

This work is motivated by the desire to train large-scale ML models with differential privacy.  Our work is best characterized by its simplicity, scalability, and state-of-the-art performance. Our key ideas are straightforward to understand and implement. Our algorithms for strategy optimization scale effectively well beyond $n>10^7$, and our algorithm for distributed noise generation can handle large models with little overhead.  Finally, our mechanism offers better expected error than any other \dpmffamily-style mechanism across a wide variety of settings.

\newpage


\bibliographystyle{iclr2025_conference}
\bibliography{ref}
\appendix
\section{Notation}
\begin{table}[h!]
    \centering
    \footnotesize
    \begin{tabular}{c|c|l}
        Symbol & Domain & Meaning \\\hline
        $n$ & $\mathbb{N}_+$ & number of training iterations \\
        $d$ & $\mathbb{N}_+$ & model dimensionality \\
        $k$ & $\mathbb{N}_+$ & number of epochs \\
        $q$ & $(0,1]$ & minibatch sampling probability \\
        $b$ & $\mathbb{N}_+$ & number of bands \\
        $\sigma$ & $\mathbb{R}_+$ & noise multiplier \\
        $\bfA$ & $\mathbb{R}^{n \times n}$ & lower triangular ones matrix \\
        $\bfZ$ & $\mathbb{R}^{n \times d}$ & Gaussian noise, $Z_{ij} \sim \mathcal{N}(0, \sigma^2)$ \\
        $\bfC$ & $\mathbb{R}^{n \times n}$ & \bandmf strategy matrix \\
        $\bfC(\bftheta)$ & $\mathbb{R}^{n \times n}$ & Parameterized strategy matrix \\
        $\bfe_i$ & $\mathbb{R}^n$ & $i^{th}$ indicator vector \\
        $ \norm{\cdot}_{1,2}$ & $\mathbb{R}_+$ & maximum $L_2$ column norm \\
        $ \sigma_{SGD}(\epsilon, \delta, q, n)$ & $\mathbb{R}_+$ & \dpsgd noise multiplier
    \end{tabular}
    \caption{Table of Notation}
    \label{tab:notation}
\end{table}

\section{Examples}

\subsection{Optimal $3$-banded Strategy}

\begin{example}[\bandmf] \label{example:bandedinverse}
The matrix below is the optimal $3$-banded strategy matrix $\bfC$ configured for $9$ training iterations.  (Informal \footnote{\bandmf requires a special sampling different from the usual Poisson sampling of \dpsgd to achieve this guarantee, see \cite{choquette2024amplified} for the details.}) \bandmf with minibatch sampling probability of $q \leq \nicefrac{1}{3}$ enjoys the same privacy guarantees as 3 iterations of \dpsgd with sampling probability $3q$.  The vectors $\bfy_1, \dots \bfy_9$ are added to the minibatch gradients during training, and can be computed efficiently in an online fashion with $O(b \cdot d)$ space using \cref{alg:invlintrans}, exemplified below:

$$
\begin{bmatrix}
0.740 & 0 & 0 & 0 & 0 & 0 & 0 & 0 & 0 \\
0.500 & 0.822 & 0 & 0 & 0 & 0 & 0 & 0 & 0 \\
0.450 & 0.492 & 0.876 & 0 & 0 & 0 & 0 & 0 & 0 \\
0 & 0.286 & 0.395 & 0.821 & 0 & 0 & 0 & 0 & 0 \\
0 & 0 & 0.278 & 0.462 & 0.855 & 0 & 0 & 0 & 0 \\
0 & 0 & 0 & 0.335 & 0.442 & 0.882 & 0 & 0 & 0 \\
0 & 0 & 0 & 0 & 0.272 & 0.403 & 0.892 & 0 & 0 \\
0 & 0 & 0 & 0 & 0 & 0.243 & 0.409 & 0.936 & 0 \\
0 & 0 & 0 & 0 & 0 & 0 & 0.194 & 0.353 & 1.000 
\end{bmatrix}^{-1}
\begin{bmatrix}
\bfz_1 \\ \bfz_2 \\ \bfz_3 \\ \bfz_4 \\ \bfz_5 \\ \bfz_6 \\ \bfz_7 \\ \bfz_8 \\ \bfz_9
\end{bmatrix}
= 
\begin{bmatrix}
\bfy_1 \\ \bfy_2 \\ \bfy_3 \\ \bfy_4 \\ \bfy_5 \\ \bfy_6 \\ \bfy_7 \\ \bfy_8 \\ \bfy_9
\end{bmatrix}$$

\begin{multicols}{2}
\begin{itemize}
    \item $\bfy_1 = \nicefrac{\bfz_1}{0.740}$
    \item $\bfy_2 = \nicefrac{(\bfz_2 - 0.500 \bfy_1)}{0.822}$
    \item $\bfy_3 = \nicefrac{(\bfz_3 - 0.492 \bfy_2 - 0.450 \bfy_1)}{0.876}$
    \item $\bfy_4 = \nicefrac{(\bfz_4 - 0.395 \bfy_3 - 0.286 \bfy_2)}{0.821}$
  \item $\dots$
  \item $\bfy_9 = \nicefrac{(\bfz_9 - 0.353 \bfy_8 - 0.194 \bfy_7)}{1.000}$ 
\end{itemize}
\end{multicols}
\end{example}

\section{Computing the gradient of the expected error} \label{sec:gradients}

We implemented functions to compute the expected error of banded strategies (\cref{alg:generalobj}) and banded Toeplitz strategies (\cref{prop:toeplitz_error}) in pure JAX.  This allowed us to use the \texttt{jax.grad} transformation to give us a function to compute its gradients, so we didn't have to manually derive them. In this section, we discuss the time and memory complexity of these gradient calculations.

For both banded strategies and banded Toeplitz strategies, computing the gradient of the objective function requires back-propagation through \cref{alg:invlintrans}. In the banded case, this algorithm is invoked with a sequence of size $n$ vectors, while in the banded Toeplitz case, it is invoked with a sequence of scalars.   To compute the objective function, the intermediate values $\bfy_i$ can be discarded on step $b+i$.  However, to compute the gradient, by default \texttt{jax.grad} requires keeping around all values of $\bfy_i$ in memory at once.

\textbf{Banded Strategies}  Both the time and memory complexity of computing the gradient with this default implementation is $O(n^2 b)$ in the banded case.   By using the \texttt{jax.checkpoint} transformation, we can trade-off memory for time during the backward pass by only storing some subset of intermediates.  Specifically, given a parameter $k$ that roughly corresponds to the number of intermediates to keep around for the backwards, we can compute the gradient using $O(n^2 b k)$ time and $O(n^2 b / k)$ memory by re-materializing the inputs we need when we need them.  We choose $k$ so that the total memory consumed is less than or equal to \texttt{4GB}.   

\textbf{Banded Toeplitz Strategies} The time and memory complexity of computing the gradient with the default implementation is $O(b n)$ and $O(n)$ respectively.  We believe the memory complexity of he loss function could be reduced to $O(b)$ with a more careful implementation though.  However, $O(n)$ memory is already small enough to scale up to all values of $n$ we care about in practice, and no special tricks are needed to improve the memory footprint.  

\subsection{Timing Experiments} \label{sec:scalability}

Figure \cref{fig:scalability} shows the wallclock time required to evaluate the loss function and it's gradient on both CPUs and GPUs, for various $n$ at fixed $b=16$.  We evaluate the wall clock time for increasing $n$ until the total time to compute both the loss and grad exceeds $60$ seconds.  Our findings from this experiment are:

\begin{itemize}[leftmargin=*]
    \item We can compute both the loss and gradient of general banded strategies up to about $n \approx 10^5$ under reasonable time limits when utilizing GPUs.  As $n$ gets larger, the relative gap between the time to compute the loss and gradient grows, as expected due to our checkpointing strategy described above.  
    \item We can compute both the loss and gradient of banded Toeplitz strategies beyond $n > 10^7$ under reasonable time limits.  These computations do not benefit from GPUs, and in fact an order of magnitude faster on CPUs.  This is due to the inherent sequential nature of \cref{alg:invlintrans}, which is not a type of computation that benefits from GPUs.
\end{itemize}

\begin{figure}[t]
    \centering
    \begin{subfigure}[t]{0.49\linewidth}
        \includegraphics[width=\linewidth]{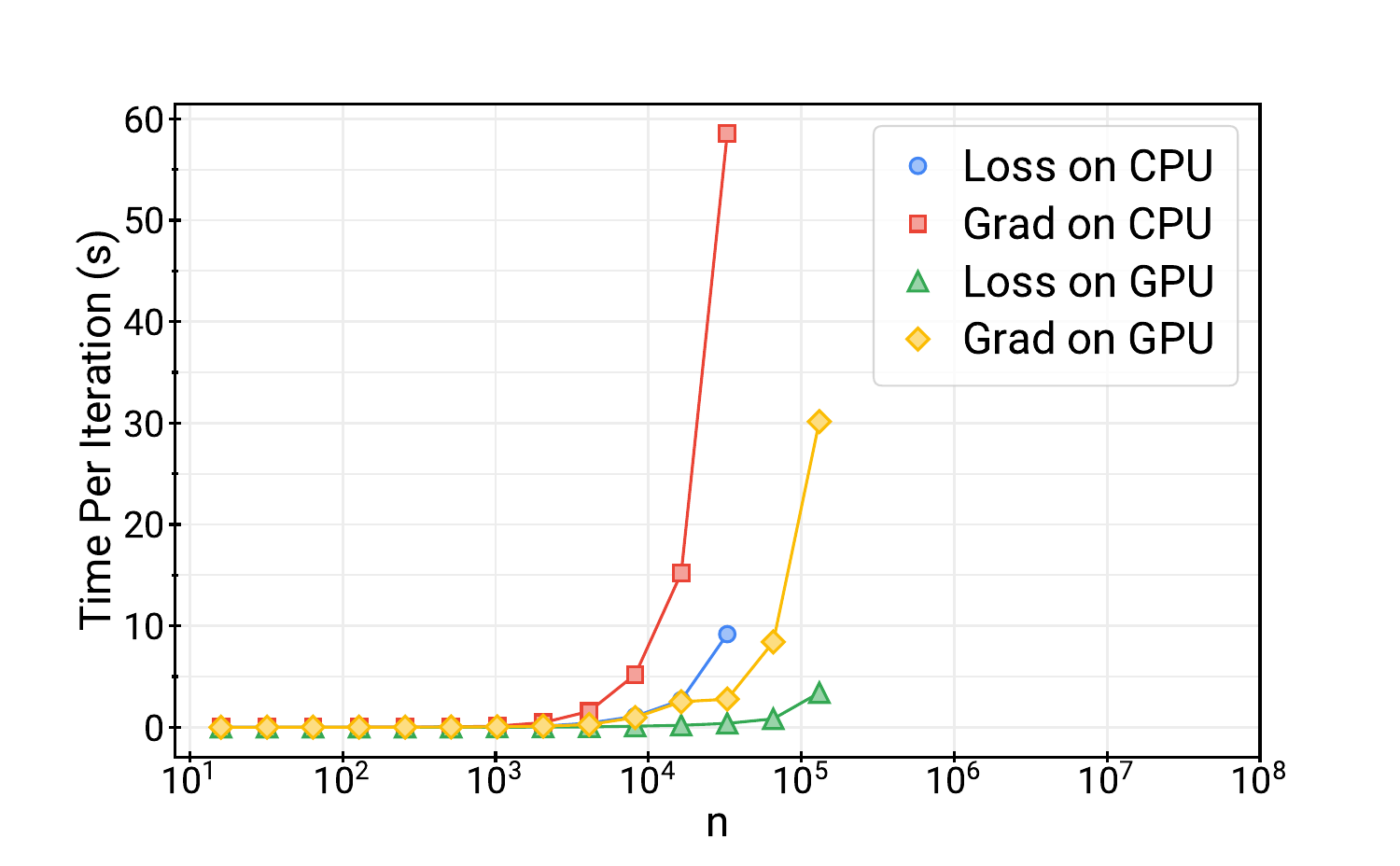}
        \caption{Banded Timing}
        \label{fig:banded_timing}
    \end{subfigure}
    \begin{subfigure}[t]{0.49 \linewidth}
        \includegraphics[width=\linewidth]{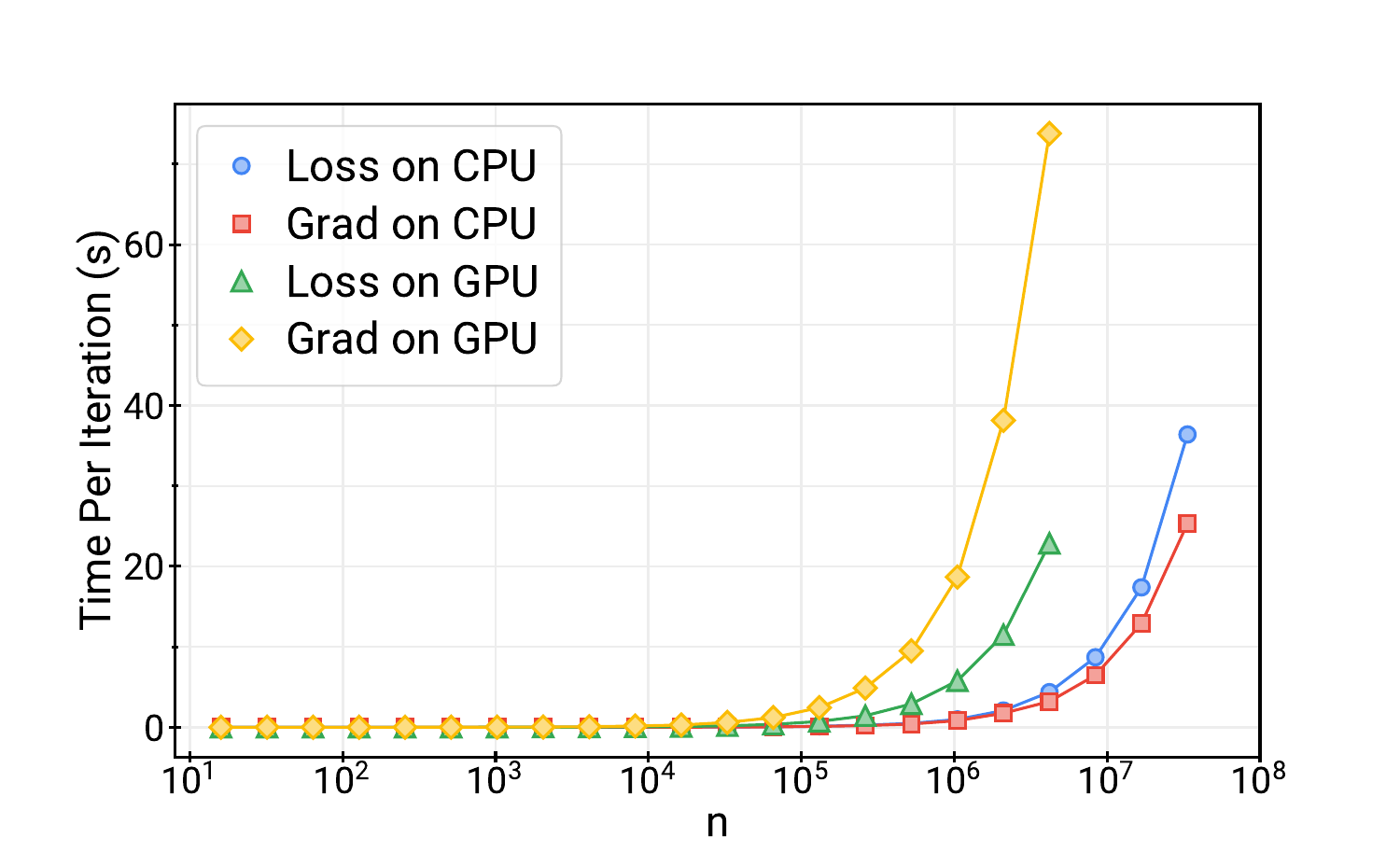}
        \caption{Toeplitz Timing}
        \label{fig:toeplitz_timing}
    \end{subfigure}
    \caption{Wallclock time required to evaluate the total squared error objective function and it's gradient as a function of $n$ for $b=16$.  Does not include JIT-compile time, which is amortized over strategy optimization.} \label{fig:scalability}
\end{figure}

\section{Generalizations to Other Strategy and Workload Classes} \label{sec:generalizations}

Our presentation in \cref{sec:implicit} focused on the Prefix workload and the class of (column-normalized) banded matrices and Toeplitz matrices.  However, our core approach also applies to a broader set of strategies in workloads, as we show in this section.

\subsection{Streaming Linear Operators}

We begin by providing a definition for a Streaming Linear Operator:

\begin{definition}[Streaming Linear Operator (SLO)]
A Streaming Linear Operator is a function that consumes a sequence of vectors $\bfz_1, \dots, \bfz_n$ as input one at a time, and produces a sequence of vectors $\bfy_1, \dots, \bfy_n$ one at a time such that $\bfy_i$ is a linear function of $\bfz_1, \dots, \bfz_i$.  We say a SLO is $b$-buffered if it only requires storing a state of size at most $b$ vectors.
\end{definition}

We note that there is a one-to-one correspondence between SLOs and lower triangular matrices.  The Prefix matrix can be represented as a $1$-buffer SLO, shown in \cref{alg:sloprefix}.  Banded lower triangular matrices and their inverses can be represented as $b$-buffer SLOs, as shown in \cref{alg:invlintrans}.  The heavyball momentum + learning rate cooldown workload studied in prior work \cite{denison2022private,choquette2022multi} can also be represented as a $2$-buffer SLO.

\begin{figure}[!t]
\begin{algorithm}[H]
\caption{Streaming Linear Operator for Prefix}
\begin{algorithmic}
\State \textbf{Input:} stream of vectors $\bfz_1, \dots, \bfz_n \in \mathbb{R}^d$, where $\bfZ$ is the matrix with rows $\bfz_i$.
\State \textbf{Output:} $\bfY = \bfA \bfZ$, one row at a time.
\For{$i=1, \dots, n$}
\State $\bfy_i = \bfy_{i-1} + \bfz_i$
\State \textbf{yield} $\bfy_i$
\EndFor
\end{algorithmic}
\end{algorithm}
\caption{\label{alg:sloprefix} Algorithm for streaming matrix multiplication by a Prefix matrix.  To simplify the presentation, we use the convention that out-of-bounds indexing into a matrix or vector returns $0$.}
\end{figure}

\subsection{Efficiently Optimizing Parameterized Strategies via SLOs}

Recall that computing the objective function efficiently is the core requirement for performing efficient strategy optimization over the space of strategies.  As long as we can express this function in terms of Jax primitives, we can rely on auto-differentiation tools to compute the gradients and perform the optimization.  Hence, our focus in this section is on generalizing \cref{alg:efficientobj} to handle \emph{any} workload represented as a SLO, and \emph{any} parameterized strategy whose inverse is a SLO, and \emph{any} differentiable aggregator of the per-query expected squared errors.

\cref{alg:generalobj} is an algorithm to calculate the per-query expected error of any workload and parameterized strategy represented as a pair of SLOs.  The time and memory complexity of this algorithm depends only on the complexity of the underlying SLO implementations.  In general, for a $b_1$-buffer SLO for $\bfC^{-1}$ and $b_2$-buffer SLO for $\bfA$, the time and memory complexity of \cref{alg:generalobj} is $O(n^2 (b_1 + b_2))$ and $O(n (b_1 + b_2))$ respectively.  We note that the per-query-error can be combined with any (sub)-differentiable aggregator to get a scalar-valued loss function that can be optimized.  In this paper, our focus was on expected total squared error, which is simply a sum of the per-query squared errors.  However, the max expected error is another natural choice that has been sometimes used in prior work \cite{mcmahan2024efficient,henzinger2022constant}.  The ``best'' loss function to use, i.e., the one that best correlates with learning performance, is still an open question.

\begin{figure}[!t]
\begin{algorithm}[H]
\caption{\label{alg:generalobj} Efficiently Evaluating the Per Query Error}
\begin{algorithmic}
\State \textbf{Input:} SLO for $C(\bfTheta)^{-1}$, SLO for $\bfA$
\State \textbf{Output:} $\bfb_i^\top \bfb_i$, one entry at a time, where $\bfB = \bfA \bfC(\bfTheta)^{-1}$
\State $\bfb_0 = \mathbf{0}$
\For{$i=1, \dots, n$}
\State $\bfy_i = SLO_{C^{-1}}(\bfe_i)$
\State $ \bfb_i = SLO_A(\bfy_i)$
\State \textbf{yield} $\bfb_i^\top \bfb_i$
\EndFor
\end{algorithmic}
\end{algorithm}
\end{figure}

\subsection{Efficiently Optimizing Banded Toeplitz Strategies for Toeplitz Workloads}

In this section, we show that we can efficiently calculate the expected per-query squared error for a banded Toeplitz strategy on \emph{any} Toeplitz workload with a minor modification to \cref{prop:toeplitz_error}.  These generalizations do not affect the time or memory complexity of the algorithm.

\begin{prop}[Banded Toeplitz Expected Per-Query Squared Error] \label{prop:bandederror}
Let $\bflambda, \bftheta \in \mathbb{R}^b$ be Toeplitz coefficients for $\bfA$ and $\bfC$ respectively. The expected square error (excluding sensitivity) on the $i^{th}$ query can be evaluated as
$ \sum_{j=1}^i w_j^2 $
where $\bfw = \bfC(\bftheta)^{-1} \mathbf{\bflambda} $
\end{prop}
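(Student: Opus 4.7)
The plan is to mirror the proof of \cref{prop:toeplitz_error}, generalizing in two directions: from the Prefix workload to any lower triangular banded Toeplitz workload, and from total to per-query squared error. The core algebraic fact I would lean on is that lower triangular Toeplitz matrices form a commutative subalgebra closed under products and (when invertible) inversion. Since $\bfA$ and $\bfC(\bftheta)$ are both lower triangular Toeplitz, and $\bfC(\bftheta)^{-1}$ inherits this structure, the product $\bfB := \bfA \bfC(\bftheta)^{-1} = \bfC(\bftheta)^{-1} \bfA$ is also lower triangular Toeplitz, and hence completely determined by its first column.

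Next I would identify that first column. Under the banded Toeplitz convention used in the paper ($A_{ij} = \lambda_{i-j+1}$ for $1 \le i-j+1 \le b$, and $0$ otherwise), the first column of $\bfA$ is the coefficient vector $\bflambda$ padded with zeros to length $n$. With a slight abuse of notation I would continue to write $\bflambda$ for this padded vector. Using the commutation established above, the first column of $\bfB$ is $\bfB \bfe_1 = \bfC(\bftheta)^{-1} \bfA \bfe_1 = \bfC(\bftheta)^{-1} \bflambda = \bfw$, matching the stated formula. I would also note that $\bfw$ can still be computed in $O(n \cdot b)$ time and $O(b)$ memory by running \cref{alg:invlintrans} on the zero-padded input stream, so the complexity guarantees of the Prefix case carry over.

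Finally, because $\bfB$ is lower triangular Toeplitz with first column $\bfw$, its $i^{\text{th}}$ row has entries $(w_i, w_{i-1}, \dots, w_1, 0, \dots, 0)$. The expected squared error (excluding sensitivity) on the $i^{\text{th}}$ query is the squared $\ell_2$ norm of this row, which evaluates to $\sum_{j=1}^{i} w_j^2$ after the change of index $k = i-j+1$. As a sanity check, summing this expression over $i$ and interchanging the order of summation recovers the formula in \cref{prop:toeplitz_error} when $\bflambda = \mathbf{1}$.

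There is no significant technical obstacle here — the argument is essentially a structural rewrite of the Prefix-only proof of \cref{prop:toeplitz_error}, using that the Toeplitz-subalgebra argument never actually relied on the workload being all-ones. The one subtlety worth highlighting in a formal writeup is the dimensional mismatch between $\bflambda \in \mathbb{R}^b$ and the column vectors in $\mathbb{R}^n$ that appear in the algebra, which I would resolve by stating the zero-padding convention upfront.
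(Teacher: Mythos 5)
Your proposal is correct and is exactly the argument the paper intends: the paper offers no separate proof for this proposition, describing it only as "a minor modification" of \cref{prop:toeplitz_error}, and your proof is precisely that modification — same commutative lower-triangular-Toeplitz-subalgebra argument, with the first column of $\bfA$ replaced by the zero-padded $\bflambda$ and the Frobenius-norm counting replaced by a per-row squared norm. Your explicit handling of the $\mathbb{R}^b$ versus $\mathbb{R}^n$ padding convention is a small clarity improvement over the paper's notation.
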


\section{Theoretical Guarantees}

In this section we provide theoretical guarantees on the expected error of our mechanism.  To do so, it is useful to state the theoretical guarantees of the \bandsqrt mechanism.  We restate their main theoretical guarantee below, specialized to the Prefix workload:

\begin{prop}[Thm 6 \cite{kalinin2024banded}]\label{prop:bandsqrt}
Let $\bfA$ denote the Prefix workload let $\bfC(\bftheta)$ denote the strategy matrix for the \emph{Banded Square Root} factorization with $b$ bands.  Then

$$
\norm{\bfC(\bftheta)}_{1,2} \norm{\bfA \bfC(\bftheta)^{-1}}_F =
O\Big(\sqrt{\frac{n \log{b}}{b}}\Big)
$$
\end{prop}

We will now argue that our Banded Toeplitz strategy inherits the theoretical guarantees of the \bandsqrt strategy.

\begin{prop}\label{prop:theory} 
Let $\bfC(\bftheta_0)$ denote the strategy for \bandsqrt and note that it is a banded Toeplitz matrix.  Now let $\bfC(\bftheta_*)$ denote the banded Toeplitz matrix numerically optimized for the objective function specified in \cref{prop:toeplitz_error}, using $\bftheta_0$ as the initialization.  Then the expected error of $\bfC(\bftheta_*)$ is never worse than the expected error of $\bfC(\bftheta_0)$, that is:

$$\norm{\bfC(\bftheta_*)}_{1,2}^2 \norm{\bfA \bfC(\bftheta_*)^{-1}}_F^2 \leq \norm{\bfC(\bftheta_0)}_{1,2}^2 \norm{\bfA \bfC(\bftheta_0)^{-1}}_F^2$$
\end{prop}

\begin{proof}
First note that the objective we optimize is identical to the one shown in the proposition above, and note that it is a product of two continuous and differentiable functions of $\bftheta$, and is therefore also continuous and differentiable.  For simplicity, assume we optimize this objective with gradient descent.  In each iteration, we compute the gradient which points in the direction of steepest descent.  With sufficiently small step sizes, the sequence of iterates are guaranteed to be non-inceasing.  Thus, the objective function of the  final iterate is less than or equal to the objective function at the initial point.
\end{proof}

Combining \cref{prop:bandsqrt} with \cref{prop:theory} shows that our banded Toeplitz strategy inherits the theoretical guarantees of the \bandsqrt mechanism \cite{kalinin2024banded}.   For simplicity, we presented these propositions for the Prefix workload, but \cref{prop:bandsqrt} can be generalized to handle any momentum-cooldown workload \cite{kalinin2024banded}, and \cref{prop:theory} applies for any workload.  

\section{Additional Experiments} \label{sec:appendix:experiments}

\subsection{Amplification vs Noise Correlation}

In \cref{fig:amp_cor_benefit}, we measure the benefit of privacy amplification and noise correlation respectively, by comparing the RMSE of Amplified \bandmf with Unamplified \bandmf and \dpsgd.  Our main findings from this experiment are:

\begin{enumerate}
    \item Amplification is most beneficial when both $\epsilon$ and epochs are small.  However, even up to $\epsilon = 8$, there is a meaningful improvement in RMSE of $\approx 15\%$.  \cref{fig:cor_benefit} shows that correlated noise is most beneficial when $\epsilon$ is large and epochs is small, and lines up very nicely with \cref{fig:bands_vs_epochs_vs_epsilon}.  In the most extreme setting, the RMSE of \bandmf can be $9\times$ lower than $\dpsgd$.  With larger epochs or smaller $\epsilon$, the benefit of correlated noise decreases, confirming it is most useful in compute-constrained settings.  
    \item When the number of epochs is large (but still well less than $n$), Amplified \dpsgd and Unamplified \bandmf both achieve similar RMSE (as indicated by the dark blue region), despite being very different mechanisms.  Moreover, the RMSE in these settings nearly matches the RMSE attained by full-batch \dpsgd, even with many fewer epochs ($512$ instead of $16384$).
\end{enumerate}

\begin{figure}[t]
  \centering
  \begin{subfigure}[t]{0.49\linewidth}
    \includegraphics[width=\linewidth]{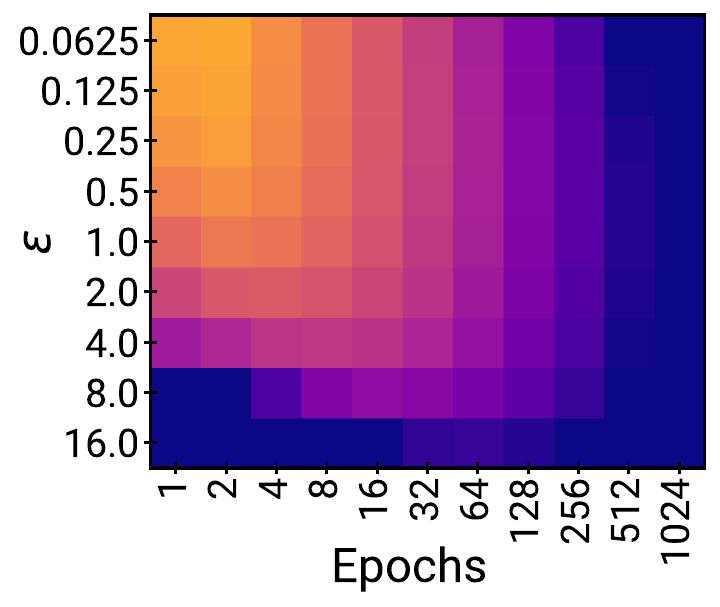}
    \caption{Amplification Benefit}
    \label{fig:amp_benefit}
  \end{subfigure}%
  \begin{subfigure}[t]{0.5\linewidth}
    \includegraphics[width=\linewidth]{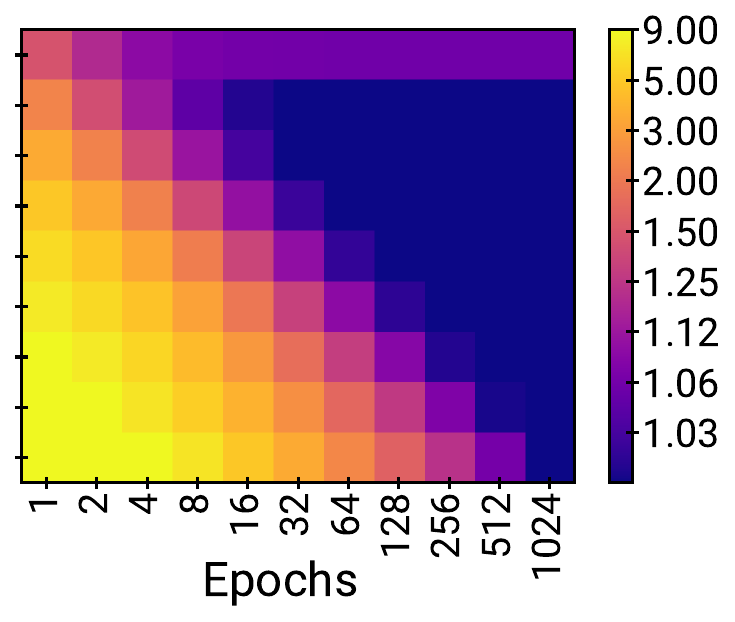}
    \caption{Correlation Benefit}
    \label{fig:cor_benefit}
  \end{subfigure}
  \caption{(a) Ratio of RMSE between Unamplified \bandmf ($b = \nicefrac{n}{k}$) and \bandmf, which measures the benefit of amplification in different settings. (b) Ratio of RMSE between Amplified \dpsgd ($b=1$) and \bandmf, which measures the benefit of correlated noise in different settings.  Both plots show results for $n=16384$.}
  \label{fig:amp_cor_benefit}
\end{figure}

\subsection{Comparisons to Baselines}

In \cref{fig:baseline_compare} of the main text, we compared our scalable amplified \bandmf approach with other strategies as a function of $\epsilon$.  In this section, we compare against the same baselines, but vary the number of training iterations $n$ and the number of epochs $k$.

In \cref{fig:baseline_vary_n}, we see that the \bandmf improves relative to baselines as $n$ increases.  At $n=10^5$, our mechanism enjoys $25\%$ smaller RMSE than the next best competitor.  In \cref{fig:baseline_vary_k} we see that \bandmf enjoys the greatest relative improvement over competitors for small $k$, although the relationship is not monotonic.

\begin{figure}[h]
    \centering
    \begin{subfigure}[t]{0.5\linewidth}
        \includegraphics[width=\linewidth]{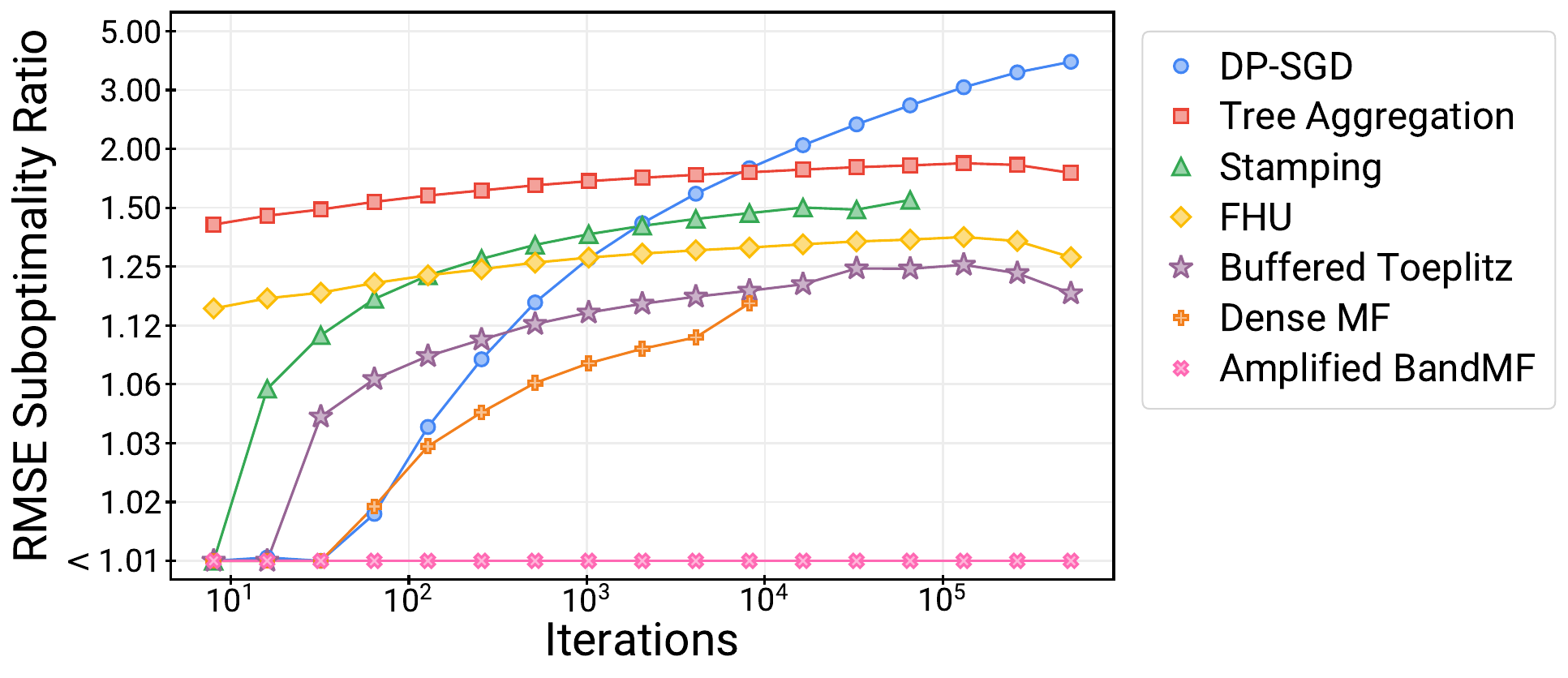}
        \caption{$k=8, \epsilon=4$}
        \label{fig:baseline_vary_n}
    \end{subfigure}%
    \begin{subfigure}[t]{0.5\linewidth}
        \includegraphics[width=\linewidth]{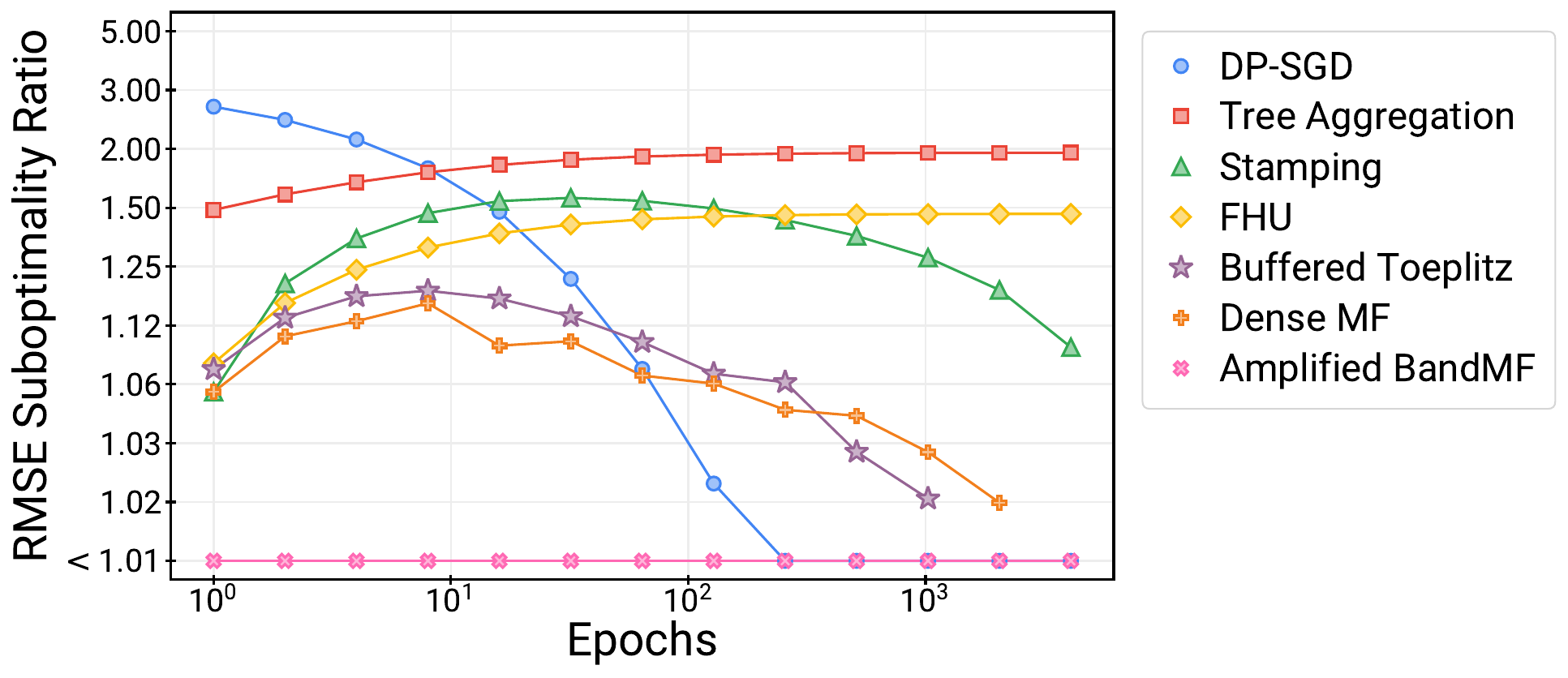}
        \caption{$n=8192, \epsilon=4$}
        \label{fig:baseline_vary_k}
    \end{subfigure}
    \caption{RMSE Suboptimality ratio of different strategies as a function of $n$ (left) and $k$ (right) for $\epsilon = 4$.}
    \label{fig:more_baselines}
\end{figure}

\subsection{Bands vs. Epochs vs. RMSE}

In this section, we revisit \cref{fig:bands_vs_rmse} but increase $\epsilon$ from $1$ to $8$.  \cref{fig:bands_vs_rmse_eps8} shows how the number of bands effects the RMSE under different number numbers of epochs.  With larger $\epsilon$, more epochs are needed for \dpsgd to get near full-batch performance (in this case, $256$ epochs).  The optimal number of bands also tends to be somewhat higher in this regime, with $128$ bands being optimal for the $8$ epoch setting.   \bandmf needs $16$ epochs to get within a factor of $2\times$ of the RMES of full-batch \dpsgd.

\begin{figure}[h!]
    \centering
    \includegraphics[width=0.85\linewidth]{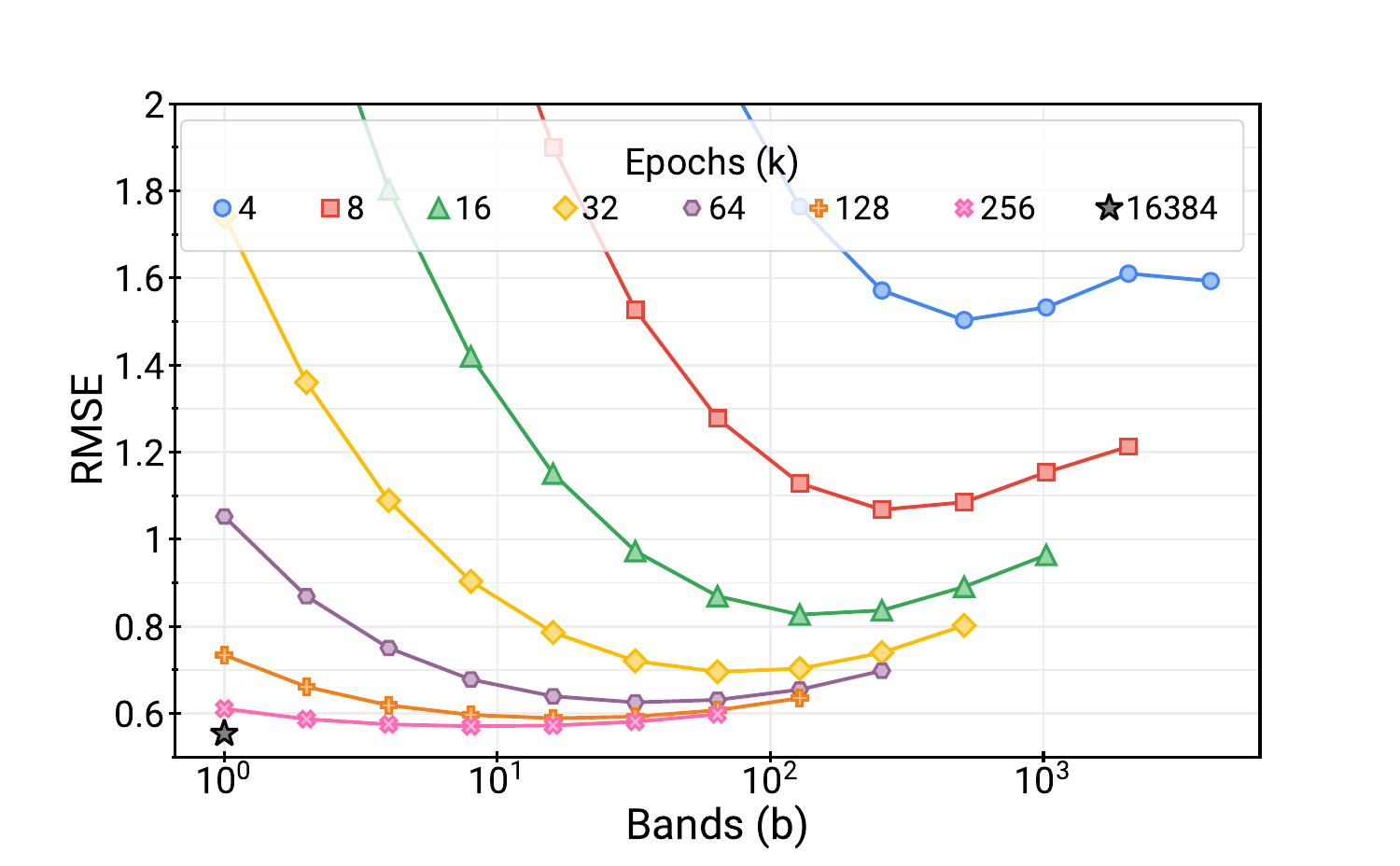}
    \caption{\label{fig:bands_vs_rmse_eps8} RMSE of \bandmf as a function of $b$ for various epochs, with fixed $(\epsilon, \delta) = (8, 10^{-8})$ and $n=16384$.  With $b=1$, \bandmf is equivalent to $\dpsgd$, and only benefits from amplication (not correlated noise).  With $b=\frac{n}{k}$, \bandmf only benefits from correlated noise (not amplification) and closely resembles \dpmf.  The best RMSE is obtained somewhere in the middle, with the both the best RMSE and corresponding value of $b$ decreasing with epochs.  By using \bandmf instead of \dpsgd, one can run for far fewer epochs without sacrificing nearly as much utility.}
\end{figure}

\subsection{Max Error vs. Learning Performance}

\begin{figure}
  \centering
  \begin{subfigure}[t]{0.53\linewidth} 
    \includegraphics[width=\linewidth]{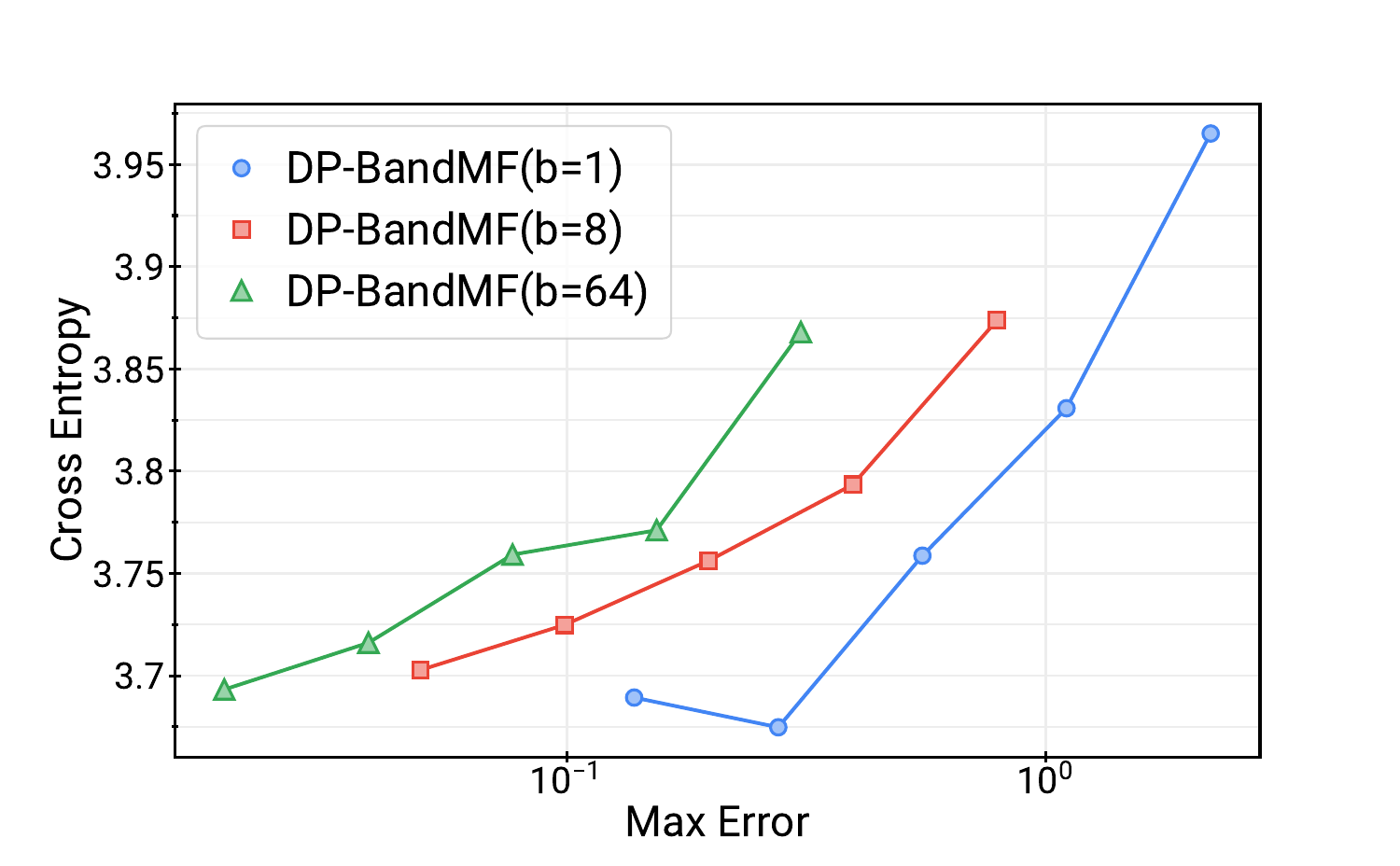}
    \caption{Adaptive Optimizer}
    \label{fig:adaptive_max}
  \end{subfigure}%
  \begin{subfigure}[t]{0.53\linewidth} 
    \includegraphics[width=\linewidth]{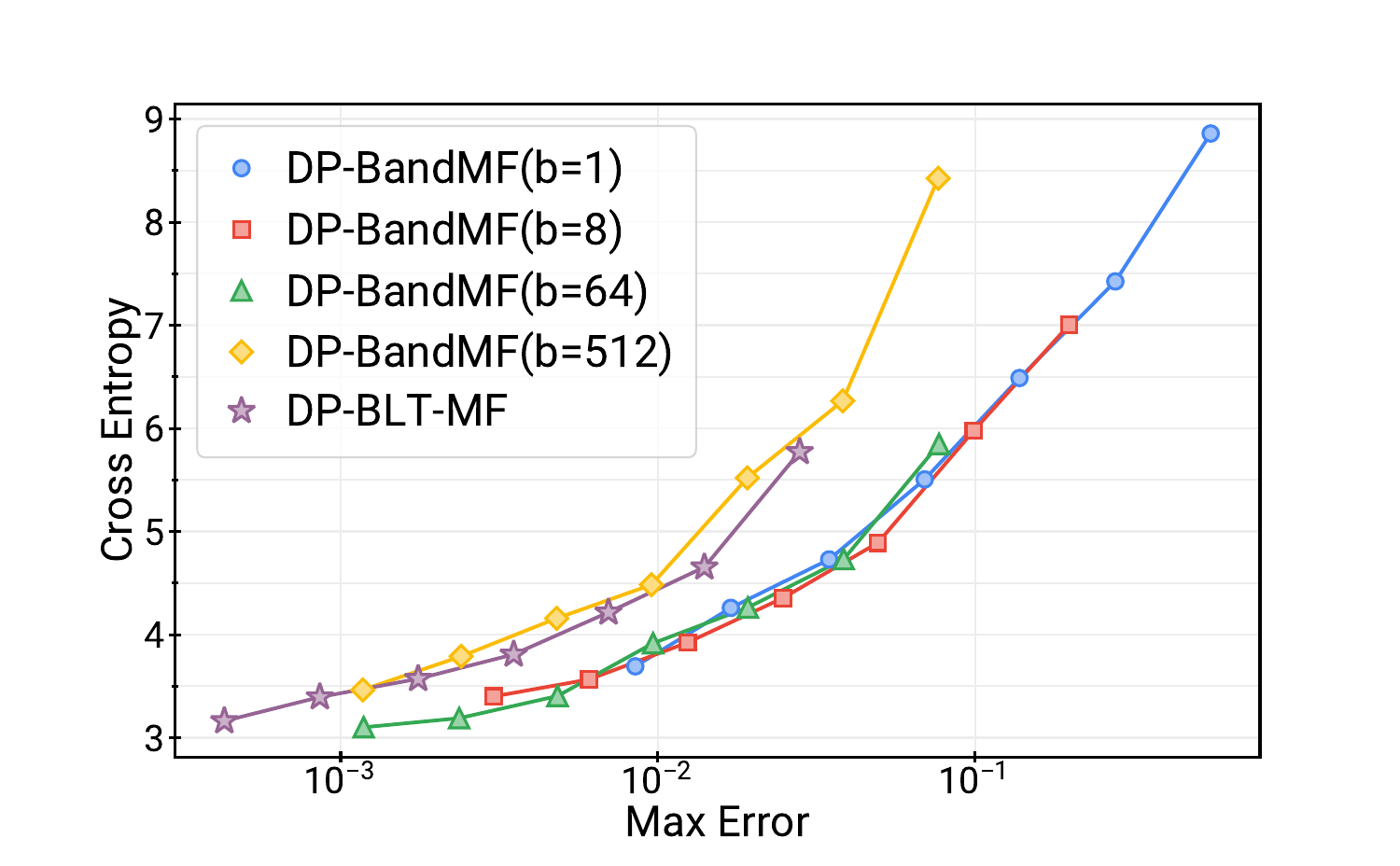}
    \caption{Non-Adaptive Optimizer}
    \label{fig:nonadaptive_max}
  \end{subfigure}
  \caption{Max Per-Query Error vs. Learning Performance (evaluation cross entropy) with an adaptive optimizer (a) and a non-adaptive optimizer (b). In both (a) and (b), a 4M parameter \texttt{BertTiny} model is trained on the StackOverflow dataset for various noise multipliers.}
  \label{fig:rmse_vs_xent_max}
\end{figure}

\section{Qualatative Comparison to Prior and Concurrent Work} \label{sec:qualatative_compare}

There are three primary axes we can evaluate prior matrix factorization approaches on: (1) efficiency of strategy selection, (2) training-time overhead, and (3) utility.  \cref{table:mechanism_compare}

\begin{table}[h]
    \centering
    \begin{tabular}{l||c|c|c}
\multicolumn{1}{c}{\multirow{3}{*}{\textbf{Factorization}}} & \textbf{Efficient} & \textbf{Sub-linear} & \textbf{Optimal} \\
& \textbf{Strategy} & \textbf{Training}  & \textbf{in at least}  \\
& \textbf{Selection} & \textbf{Overhead} & \textbf{one setting} \\ \hline\hline
Identity (\dpsgd) 
& \cmark & \cmark & \xmark \\
\honaker \cite{kairouz2021practical} & \cmark & \cmark & \xmark \\
\fhu \cite{henzinger2022constant} & \cmark & \xmark & \xmark \\
LTI \cite{choquette2023correlated} & \cmark & \xmark &  \\
\stamping \cite{denisov2022improved} & \xmark & \xmark & \xmark \\
\memf \cite{choquette2022multi} & \xmark & \xmark & \cmark \\
\blt  \cite{mcmahan2024efficient} & \cmark & \cmark & \xmark \\
\bandsqrt \cite{kalinin2024banded} & \cmark & \cmark & \xmark \\\hline
\textbf{\bandmf [Ours]} & \cmark & \cmark & \cmark \\
\end{tabular}
\caption{Summary of existing matrix factorizations and their performance characteristics.}
\label{table:mechanism_compare}
\end{table}

The factorizations that are most efficient to calculate avoid numerical optimization \cite{abadi2016deep,kairouz2021practical,kalinin2024banded,henzinger2022constant}, but do so at the cost of utility.  The factorizations that are most expensive to compute represent the noise correlation strategy using dense matrices, and are generally inefficient for large $n$ \cite{denisov2022improved,choquette2022multi,choquette2024amplified}.  Our proposed approach, as well as one concurrent approach \cite{mcmahan2024efficient,mcmahan2024hassle} get the best of both worlds, by numerically optimizing over parameterized classes of strategies that avoid dense representations.  

Different factorizations have different properties that can be exploited for runtime efficiency.  \dpsgd is the most efficient method, since it adds independent rather than correlated noise.  The most general matrix factorizations require time linear in $n$ to generate noise for a \emph{single iteration}, which can be prohibitively expensive for large $n$ (even with distributed noise generation).  Indeed, these prior works focused on settings where $n \leq 2052$ \cite{denisov2022improved,choquette2022multi}.   Our method bypasses this limitation via \cref{alg:invlintrans}, and enjoys only a $b \times$ overhead.

Finally, different factorizations enjoy different utility.  We say a factorization is optimal in at least one setting of ($n \leq 2^{20}$, $k \leq n$, $\epsilon \leq 16$) if it achieves the lowest RMSE among all methods that scale to that setting.   While this characterization doesn't quantify how close to optimal different methods are, it clearly demonstrates the advantages of \bandmf relative to other scalable approaches.

\section{Improving fragility to min separation for Federated Training Scenarios}

The primary focus of this paper has been on centralized training regimes, where \bandmf benefits from privacy amplification by sampling.  As observed in prior work \citep{choquette2024amplified,mcmahan2024hassle} and in \cref{fig:baseline_compare} Unamplified \bandmf is still a state-of-the-art mechanism in federated training regimes.  There, a $(k, b)$-min-separation participation pattern is used where each user contributes at most $k$ times during training, and each contribution is separated by at least $b$ iterations.  In centralized scenarios we know what these parameters are in advance based on properties of the dataset and training setup.  In federated scenarios, it is more difficult to control, and therefore the strategy is typically optimized based on estimates of these quantities, while the privacy budget consumed during training is computed post-hoc \citep{xu2023federated}.  If the min separation is correctly estimated, \bandmf is the best known mechanism currently available.  However, if the true min separation differs from the one the strategy was optimized for significantly (e.g., more than a factor of $2\times$), \blt can outperform \bandmf.  

Fundamentally, this phenomenon is due to the fact that the Toeplitz coefficients in the BLT strategies decay more quickly than the ones in banded Toeplitz strategies.  A simple heuristic can improve the robustness of banded Toeplitz strategies to miscalibration of min-separation, at the cost of increased expected error when the min-separation is correctly calibrated.  

\begin{itemize}
    \item Let $b_0$ and $b$ denote the lower and upper bound on min separation we want to tailor the mechanism for.
    \item Optimize $\bftheta \in \mathbb{R}^b$ for expected error as in \cref{prop:toeplitz_error}.  
    \item Set $\theta_i = \frac{b-i}{b - b_0} \theta_i $ for all $i \geq b_0$.
\end{itemize}

We take the setting considered in \citet{mcmahan2024hassle}, where $n=2000$, $k=5$, and min-sep is between $b_0 = 200$ and $b=400$.  In \cref{fig:toeplitz_coefs} we plot the Toeplitz coefficients of three strategies: BLT, BandToep optimized for $b=400$, and BandToep + Heuristic described above.  In \cref{fig:b_fragility} we plot the RMSE of each strategy as a function of the minimum separation.  With our heuristic, we achieve lower RMSE than BLT strategies across all min separations considered, but sacrifice some RMSE over the baseline banded Toeplitz strategy when the minimum separation is very close to $400$.  Note the relative differences between all three mechanisms are pretty small in this setting.  Finally, we will note that the heuristic can likely be replaced with something more principled, by e.g., optimizing for the expected error averaged over different min-separations.  We leave this exploration for future work.

\begin{figure}[t]
    \centering
    \begin{subfigure}[t]{0.49\linewidth}
        \includegraphics[width=\linewidth]{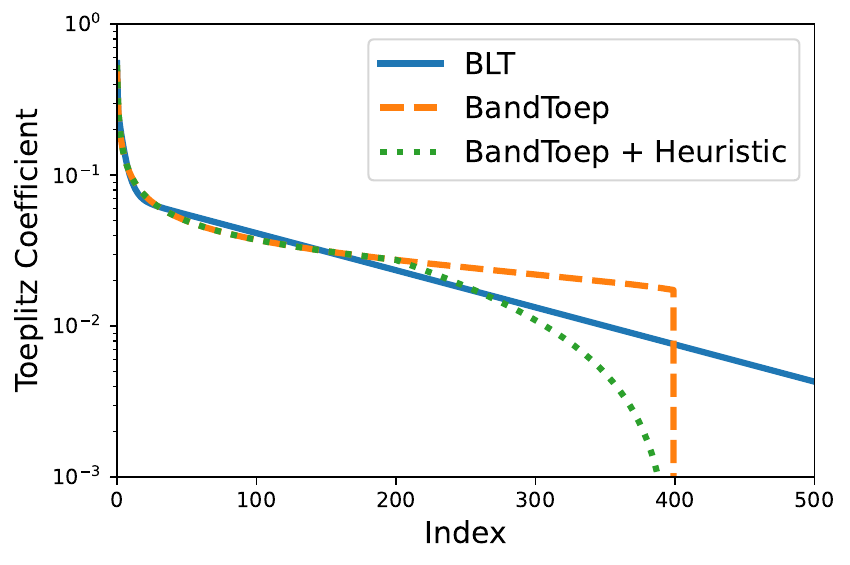}
        \caption{Toeplitz Coefficients}
        \label{fig:toeplitz_coefs}
    \end{subfigure}
    \begin{subfigure}[t]{0.49 \linewidth}
        \includegraphics[width=\linewidth]{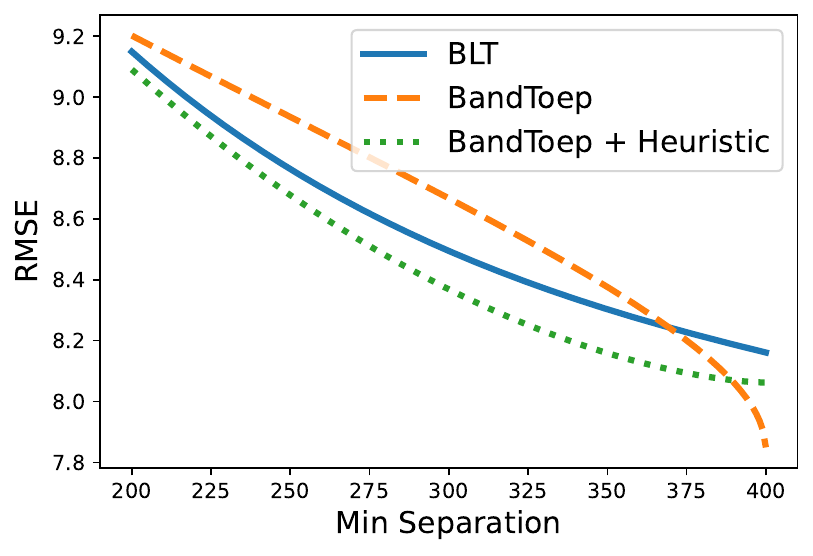}
        \caption{Min Sep Fragility}
        \label{fig:b_fragility}
    \end{subfigure}
    \caption{Comparison between Buffered Toeplitz strategies and Banded Toeplitz strategies when the min-separation is not known in advance (i.e., federated training scenarios).}
\end{figure}

\section{Scaling Beyond $n > 10^7$} \label{sec:approx_toeplitz}

Beyond $n > 10^7$, numerically optimizing banded Toeplitz strategies becomes expensive using the techniques we described in the main body.  With a small observation, we can derive an approximate loss function whose time complexity is significantly reduced.  Recall from \cref{prop:toeplitz_error} that to compute the expected error of a Toeplitz strategy we need to compute $\bfy = \bfC(\bftheta)^{-1} \mathbf{1}$ (if we are optimizing for the Prefix workload).  Note that $\bfy_i$ is defined by a simple linear recurrence of order $b$ shown in \cref{alg:invlintrans}.  Rewriting that recurrence here, we have:

$$ y_i = (1 - \sum_{j=2}^b \theta_j y_{i-j+1}) / \theta_1 $$

Now consider the setting where $n \gg b$.  Now suppose for a moment the sequence $y_i$ converges to some fixed point $x$, that is $\lim_{i \rightarrow \infty} y_i = x$.  We can solve for this fixed point by substituting $y_i = y_{i-j+1} = x$, and solving for $x$.  Doing so, we obtain:

$$  x = \frac{1}{\sum_{j=1}^b \theta_j} $$

It is natural to question whether the sequence $y_i$ converges.  Since divergence would generally imply high expected error, we believe any reasonable iterative optimization procedure should produce well-behaved strategy parameters $\bftheta$.  To compute an approximate expected error, we can simply compute $y_i$ exactly up to some fixed index $m$, and then approximate $y_{i} \approx \frac{1}{\sum_j \theta_j}$ for $ i > m$.  Using this idea, we can easily approximate the expected mean squared error as:

$$ \frac{1}{n} \sum_{i=1}^n i y_i^2 \approx \frac{1}{n} \Big[ \sum_{i=1}^m i y_i^2 + \Big(\frac{1}{\sum_j \theta_j}\Big)^2 \sum_{i=m+1}^n i \Big] $$

We can evaluate this expression in $O(m)$ time using the closed form expression for $\sum_{i=m+1}^n i$.  We can similarly easily approximate the expected max squared error as:

$$ \sum_{i=1}^n y_i^2 \approx \sum_{i=1}^m y_i^2 + \Big(\frac{1}{\sum_j \theta_j}\Big)^2 (n - m + 1) $$

Empirically, the sequence converges quite quickly and we recommend setting $m=b^2$ (assuming $n > b^2$), which gives an overall time complexity of $O(b^3)$ for evaluating the loss function.  As a concrete data point, optimizing the expected max squared error for $n=2^{16}$ and $b=64$ using $m=b$, $m=b^2$ yields suboptimality ratios are $1.032$ and $1.0$ respectively.  That is, setting $m=b^2$ gives no loss in solution quality.

\section{Near-optimal banded Toeplitz strategies without gradients} \label{sec:gradient-free}

Through inspection of the optimal Toeplitz coefficients in \cref{fig:toeplitz_coefs}, one can attempt to fit a functional form to the parameters.  The  expression $\theta_i = \nicefrac{1}{i} + c$ provides an excellent fit for the optimal banded Toeplitz coefficients when $c$ is chosen appropriately.  To find the best value of $c$ given the $n$ and $b$, one can do a simple sweep over a range of possible values, picking the best one.  This avoids the need to compute gradients, and the evaluation of the loss function at different values of $c$ can be easily vectorized / parallelized.
This provides an excellent approximation of the optimal Toeplitz coefficients, particularly when $n >> b$, as demonstrated in \cref{fig:single_parameter}.

\begin{figure}
    \centering
    \includegraphics[width=0.5\textwidth]{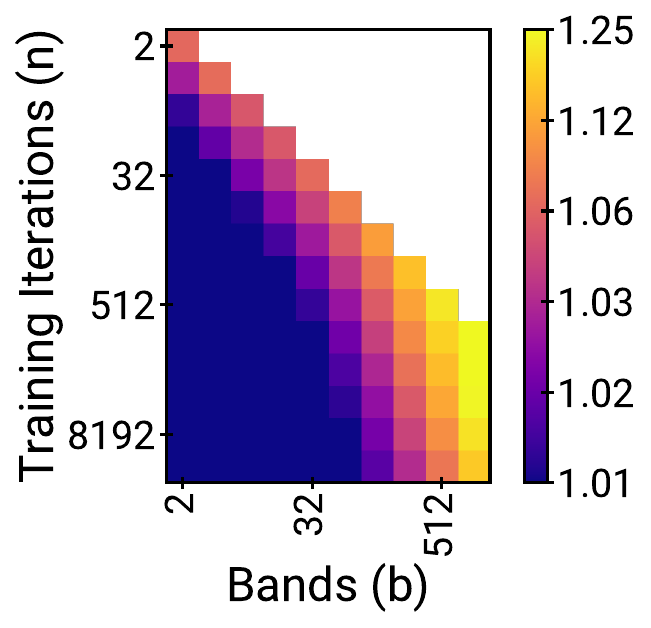}
    \caption{RMSE Suboptimaltiy Ratio of the optimal parameterized banded Toeplitz strategy of the form  $\theta_i = \nicefrac{1}{i} + c$ , and the optimal unconstrained banded Toeplitz strategy, for various choices of $n$ and $b$.}
    \label{fig:single_parameter}
\end{figure}

While we do not work out the details here, we believe that it may be possible to identify an efficient single-shot algorithm to find the optimal value of $c$ by deriving and analyzing the gradient, and identifying where it is $0$.  

\section{Optimizing Column-Normalized Banded Toeplitz Strategies}

In \cref{sec:toeplitz}, we discuss the importance of column normalization, and propose a simple heuristic of normalizing the columns of the optimized banded Toeplitz matrix.  This heuristic improves the expected error over the corresponding un-normalized strategy, but is not in general optimal among the class of all column-normalized banded Toeplitz strategies, (since it was optimized without accounting for column normalization).  In this section, we describe two approaches to optimize over the space of column normalized banded Toeplitz strategies.  In general, we \emph{can} optimize over the space of column-normalized Toeplitz strategies, but we would have to forfeit the efficiency advantages that come along with the Toeplitz structure, and instead use the techniques we described for general banded matrices in \cref{sec:implicit}.  Alternatively, we can use the gradient-free approach described in \cref{sec:gradient-free}, which can be helpful because the gradient computation is the primary bottleneck for general banded optimization as explained in \cref{sec:gradients}.  However, the latter approach would only give an approximate minimum (although in practice the sub-optimality is likely too small to matter).


\end{document}